\newcommand\Vtextvisiblespace[1][.3em]{%
  \mbox{\kern.06em\vrule height.3ex}%
  \vbox{\hrule width#1}%
  \hbox{\vrule height.3ex}}
\newcommand{\blank}{\Vtextvisiblespace[0.4em]}
\newcommand{\NN}{\mathbb{N}}
\newtheorem{theorem}{Theorem}
\newtheorem{lemma}[theorem]{Lemma}
\newtheorem{proposition}[theorem]{Proposition}
\newtheorem{corollary}[theorem]{Corollary}
\newenvironment{proof}{\begin{trivlist}
\item[\hspace{\labelsep}{\em\noindent Proof: }]
}{\hspace{\stretch{1}}\rule{1.5ex}{1.5ex}\end{trivlist}}
\title{Autoregressive Large Language Models \\ are Computationally Universal}
\author[1,2]{Dale Schuurmans}
\author[1]{Hanjun Dai}
\author[2]{Francesco Zanini}
\affil[1]{Google DeepMind}
\affil[2]{University of Alberta}
\date{}
\begin{document}

\maketitle

\begin{abstract}

We show that autoregressive decoding of a transformer-based
language model can realize universal computation,
without external intervention or modification of the model's weights.
Establishing this result requires understanding how a language model
can process arbitrarily long inputs using a bounded context.
For this purpose, we consider a generalization of autoregressive decoding
where, given a long input, emitted tokens are appended to the end of the
sequence as the context window advances.
We first show that the resulting system corresponds to a classical model
of computation, a \emph{Lag system},
that has long been known to be computationally universal.
By leveraging a new proof, we show that a universal Turing machine 
can be simulated by a Lag system with $2027$ production rules.
We then investigate whether an existing large language model
can simulate the behaviour of such a universal Lag system.
We give an affirmative answer by showing that a single system-prompt can be
developed for {\tt gemini-1.5-pro-001} that drives the model,
under deterministic (greedy) decoding, to correctly apply
each of the $2027$ production rules.
We conclude that, by the Church-Turing thesis, prompted
{\tt gemini-1.5-pro-001}
with extended autoregressive (greedy) decoding is a general purpose computer.

\end{abstract}

\section{Introduction}
\label{sec:intro}

The emergence of large language models has raised 
fundamental questions about their computational capability
relative to classical models of computation.
Several works have investigated the computational abilities
of large language models,
for example, by considering the expressiveness of
transformer architectures for representing circuits 
\cite{perezetal19,bhattamishraetal20,weietal22a}
and for representing sequential versions of such circuits
\cite{fengetal23,merrillsabharwal24}
under bounded chain of thought extensions \cite{weietal22b}.
In this paper, we consider the more general question of whether 
a large language model can support universal computation
when applying unbounded chain of thought.
Recently, it has been established that a large language model can be
augmented with an external memory to enable simulation of a universal
Turing machine via prompting \cite{schuurmans23}.
However, such a result is weakened by the use of external control mechanisms---%
in particular, regular expression parsers---%
that offload computational responsibility from the language model.
The question of whether an unaided large language model can be
Turing universal has not previously been resolved.

We provide an affirmative answer by showing
that an unaided large language model can simulate a universal Turing machine.
Achieving such a result requires a more general view of autoregressive decoding
that allows processing of arbitrarily long input strings.
In particular, we consider a natural generalization
of autoregressive decoding
where emitted tokens are appended to the end of the sequence
after processing each successive context,
which reduces to standard autoregressive decoding whenever the 
input fits within the context window.

The main result in this paper is established in a series of steps.
First, in Section~\ref{sec:auto}, we introduce the more
general perspective on
autoregressive decoding that accommodates long input strings,
subsequently
showing in Section~\ref{sec:lag} that the proposed extension drives a language
model to realize a restricted form of \emph{Lag system} \cite{wang63}---%
a variant of one of the earliest general models of computation \cite{post43}.
We then show in Section~\ref{sec:queue} that a Lag system,
which is able to organize memory as a circular queue,
can also provide bidirectional control over memory access.
After relevant background on finite memory simulation of Turing machines
in Section~\ref{sec:tm},
Section~\ref{sec:sim} then proves that 
any Turing machine can be simulated by
a restricted Lag system with a context length of $2$.
Although the universality of Lag systems has been known since \cite{wang63},
the alternative proof presented in this paper is more direct and enables the
subsequent argument.
Next, in Section~\ref{sec:utm}, we apply the reduction technique to a specific
universal Turing machine, $U_{15,2}$ \cite{nearywoods09},
and obtain a universal Lag system that is defined by a set of $2027$ production 
rules over an alphabet of $262$ symbols.
Finally, Section~\ref{sec:gemini} develops a single system-prompt that is able 
to drive a particular large language model, 
{\tt gemini-1.5-pro-001},
to correctly apply each of the $2027$ rules under greedy decoding.
This outcome allows us to conclude that 
{\tt gemini-1.5-pro-001}
with extended
autoregressive (greedy) decoding can exactly simulate the execution of 
$U_{15,2}$ on any input, hence it is a general purpose computer.

\section{Autoregressive decoding}
\label{sec:auto}

Given a finite alphabet $\Sigma=\{\sigma_1,...,\sigma_n\}$,
a \emph{string} is defined to be a finite sequence of symbols $s_1...s_k$
such that $k\in\NN$ and $s_i\in\Sigma$ for $1\leq i\leq k$.
Note that every string has a finite length but there is no upper bound
on the length of a string.
We let $|s|$ denote the length of a string $s$,
let $\Sigma^*$ denote the set of all strings,
and let $H\subset\Sigma$ denote a set of halt symbols.

A language model expresses a conditional distribution $p(s_{n+1}|s_1...s_n)$
over a next symbol $s_{n+1}\in\Sigma$ given an input string $s_1...s_n$. 
Any such model can be extended to a conditional distribution over an output
sequence $s_{n+1}...s_{n+k}$ by the chain rule of probability
\begin{eqnarray}
p(s_{n+1}...s_{n+k}|s_1...s_n) & = &
p(s_{n+1}|s_1...s_n)\,p(s_{n+2}|s_1...s_{n+1})\cdots p(s_{n+k}|s_1...s_{n+k-1})
.
\label{eq:chain rule}
\end{eqnarray}
Therefore, an output sequence $s_{n+1}...s_{n+k}$ can be generated from the
conditional distribution on the left hand side 
of \eqref{eq:chain rule}
by sampling each successive
symbol from the corresponding conditional distribution on the right hand side,
i.e.,
$s_{n+1}\sim p(\cdot|s_1...s_n)$,
...,
$s_{n+k}\sim p(\cdot|s_1...s_{n+k-1})$.
This process is referred to as \emph{autoregressive decoding}.
In practice, decoding typically proceeds until
a halt symbol in $H$ is generated or a maximum emission length is reached.

Note that a key restriction of any transformer based language model is that
it employs a bounded context that limits the length of the input string 
to at most $N$ symbols for some $N<\infty$.
However, to capture any universal notion of computation
we will need to consider computing over arbitrarily long inputs,
while also allowing for the possibility of arbitrarily long output sequences.
Therefore, we require
a more general perspective on autoregressive decoding
that is able to accommodate long input and output sequences.

First, if we assume the input is shorter than the context,
a long output sequence can be generated simply by adopting
the standard $N$-Markov assumption
\begin{eqnarray}
p(s_{n+k}|s_1...s_{n+k-1}) & = &
\left\{
\begin{array}{ll}
p(s_{n+k}|s_{n+k-N}...s_{n+k-1}) & \mbox{ if } n+k>N+1
\\
p(s_{n+k}|s_1...s_{n+k-1}) & \mbox{ if } n+k\leq N+1
\end{array}
\right.
\label{eq:N markov}
\end{eqnarray}
for all $s_1...s_{n+k-1}$;
that is, the conditioning can be truncated to the most recent $N$ symbols.
Under this assumption,
autoregressive decoding is able to generate output sequences by
successively sampling $s_{n+k}\sim p(\cdot|s_{n+k-N}...s_{n+k-1})$
for arbitrarily large $k$,
which is a default assumption in the literature.

\begin{figure}
\centering
\hskip2cm
\begin{tikzpicture}[baseline=1]
\draw[-{Latex},red] (4.030,1.189) .. controls (4.833,1.4) and (5.637,1.4) .. (6.440,1.189);
\put (0,0) {
$s_1...s_k s_{k+1}\fcolorbox{green}{white}{$s_{k+2}...s_{k+N}s_{k+N+1}$}...s_{n+k}s_{n+k+1}\fcolorbox{red}{white}{$s_{n+k+2}$}$ };
\draw[-{Latex},red] (5.300,0.289) .. controls (6.067,0.5) and (6.833,0.5) .. (7.600,0.289);
\put (0,25) {
$s_1...s_k\fcolorbox{green}{white}{$s_{k+1}s_{k+2}...s_{k+N}$}s_{k+N+1}...s_{n+k}\fcolorbox{red}{white}{$s_{n+k+1}$}$ };
\end{tikzpicture}
\caption{
\emph{Generalized autoregressive decoding}
when the length of the input sequence $n$
exceeds the context length $N$.
Here $k$ is the number of output symbols that have already been appended.
The figure depicts the generation of the $(k+1)$st output symbol
conditioning on the context of length $N$
starting at index $k+1$,
followed by the generation of the $(k+2)$nd output symbol
conditioning on the context of length $N$
starting at index $k+2$.
This process reduces to standard $N$-Markov autoregressive decoding when
$n\leq N$.
}
\label{fig:gen auto}
\end{figure}
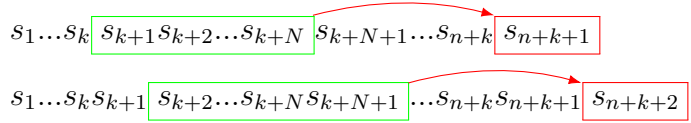

The key question remains of how to perform autoregressive decoding
when the input is longer than the context length.
Such an extension is not typically considered in the literature,
but necessary in our case.
For this purpose, 
we consider a simple generalization of
$N$-Markov autoregressive decoding, \emph{generalized autoregressive decoding},
where the next generated symbol is appended to the end of the sequence,
as shown in Figure~\ref{fig:gen auto}.
In particular, given an input of length $n$,
if we start from an initial context window at index $1$ and
let $n+k$ denote the length of the entire sequence after $k$ symbols
have been emitted, 
then assuming $n>N$, $k+1$ will index the start of the current context
and the $(k+1)$st output symbol will be generated
according to $s_{n+k+1}\sim p(\cdot|s_{k+1}...s_{k+N})$;
see Figure~\ref{fig:gen auto}.
Note that if $n\leq N$ the process reduces to standard $N$-Markov 
autoregressive decoding \eqref{eq:N markov}.
Although such an autoregressive decoding mechanism might seem peculiar,
given that the language model was trained to predict $s_{k+N+1}$ 
rather than $s_{n+k+1}$ from the context $s_{k+1}...s_{k+N}$,
we will see that this generalization
allows
the language model to exhibit rich computational behaviour.

To compare the computational ability of a
language model to classical models of computation,
we furthermore need to assume the conditional distribution is
deterministic;
that is, there exists a function $M:\Sigma^N\rightarrow\Sigma$ 
such that, for all strings $s_1...s_N$,
$p(\sigma|s_1...s_N)=1$ if $\sigma=M(s_1...s_N)$ and
$p(\sigma|s_1...s_N)=0$ if $\sigma\neq M(s_1...s_N)$.
Such deterministic behaviour can be achieved by setting the
temperature parameter of the language model to zero and fixing any additional
random seeds affecting its output.
Thus, the $(k+1)$st output symbol
is generated according to $s_{n+k+1}=M(s_{k+1}...s_{k+N})$
then appended to the end of the sequence.

Finally,
to accommodate a special edge case that we will have to elaborate below,
the language model will be required to emit a pair of output symbols 
rather than just a single symbol in certain contexts,
as shown in Figure~\ref{fig:gen auto 2}.
In this case, we use $\ell$ to denote the number of output symbols that have
already been appended after $k$ iterations, with $\ell\geq k$.
Note that
such a mechanism does not conform to the generalized autoregressive decoding 
mechanism depicted in Figure~\ref{fig:gen auto}, 
because $s_{n+\ell+2}$ will need to depend on $s_{k+1}$ and not
$s_{k+N+1}$, even after $s_{n+\ell+1}$ is generated,
as shown in Figure~\ref{fig:gen auto 2}.
Therefore, we need to consider another form of decoding,
\emph{extended autoregressive decoding},
to achieve such behaviour.
To realize this extension
we leverage the existence of
an implicit halt token $h$ outside the base alphabet $\Sigma$
(disjoint from the explicit halt tokens $H$),
and decode the language model given a context $s_{\ell+1}...s_{\ell+N}$
by generating symbols from $\Sigma\cup\{h\}$ until
$h$ is encountered (then discarded).
This technicality will be revisited when it becomes relevant,
but briefly it is the differentiator between being able to simulate
a general Turing machine versus a linear bounded automaton.

\begin{figure}
\centering
\hskip2cm
\begin{tikzpicture}[baseline=1]
\draw[-{Latex},red] (4.030,0.289) .. controls (4.820,0.5) and (5.610,0.5) .. (6.400,0.289);
\put (0,0) {
$s_1...s_k\fcolorbox{green}{white}{$s_{k+1}s_{k+2}...s_{k+N}$}s_{k+N+1}...s_{n+\ell}\fcolorbox{red}{white}{$s_{n+\ell+1}s_{n+\ell+2}$}$ };
\end{tikzpicture}
\caption{
\emph{Extended autoregressive decoding} when two output symbols 
are generated for a given context.
Here $\ell$ is the number of output symbols that have already been appended
after $k$ iterations, with $\ell\geq k$, and it is assumed the length of the
input sequence $n$ exceeds the context length $N$.
}
\label{fig:gen auto 2}
\end{figure}
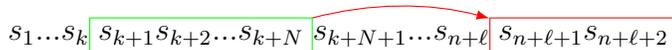

In summary, we consider deterministic autoregressive decoding of a language
model with context length $N$
according to Algorithm~\ref{alg:auto}.%
\footnote{
Note that the language model does not need to maintain the count $\ell$
of the number of symbols appended;
we include it simply to keep an end-of-sequence index $n+\ell$
that will be helpful to refer to in some proofs.
}

\begin{algorithm}
\caption{Extended autoregressive decoding of model $M$ with context length $N$}
\label{alg:auto}
\textbf{Input:} $s\leftarrow s_1...s_n$
\begin{algorithmic}
\State $\ell\gets0$
\For{k=0,1,...}
\State $c\gets s_{k+1}...s_{k+N}$ \Comment{get next context}
\State $y\gets M(c)$ \Comment{get model response}
\State $s\gets s_1...s_{n+\ell}\,y$\Comment{append response to overall sequence}
\State $\ell\gets\ell+|y|$ \Comment{track number of symbols appended}
\If{$y\cap H\neq\emptyset$}
\State halt
\EndIf
\EndFor
\end{algorithmic}
\end{algorithm}

\section{Lag systems}
\label{sec:lag}

The first key observation is that autoregressive decoding
of a large language model can be replicated by a Lag system.
Lag systems were introduced by \cite{wang63} as a simple variation of 
one of the earliest formal models of general computation,
the Tag systems studied in \cite{post43}.

A \emph{Lag system} consists of a finite set of rules $x_1...x_N\rightarrow y$,
where $N$ is the length of the context,
$x_1...x_N\in\Sigma^N$ 
denotes a sequence of symbols to be matched,
and $y\in\Sigma^*$ is a corresponding output.
For a deterministic Lag system, each pattern $x_1...x_N$ is unique,
hence the Lag system defines a partial function $L:\Sigma^N\rightarrow\Sigma^*$
that maps a pattern $x_1...x_N$ to a corresponding output $y$.
The computation of a Lag system is defined by operating over a memory string,
where in each iteration, 
a rule is matched to the prefix of the memory string,
then
the result appended to the string,
after which the first symbol is deleted; see Algorithm~\ref{alg:lag}.%
\footnote{
Once again,
the Lag system does not need to maintain the count $\ell$;
we include it simply to keep an explicit end-of-sequence index $n+\ell$.
}

\begin{algorithm}
\caption{Lag system $L$ with context length $N$}
\label{alg:lag}
\textbf{Input:} $s\leftarrow s_1...s_n$
\begin{algorithmic}
\State $\ell\gets0$
\For{k=0,1,...}
\If{$L$ contains a rule $x_1...x_N\rightarrow y$ such that $x_1...x_N=s_{k+1}...s_{k+N}$}
\State $s\gets s_{k+2}...s_{n+\ell}\,y$ \Comment{delete first symbol then append the response to sequence}
\Else
\State halt
\EndIf
\State $\ell\gets\ell+|y|$ \Comment{track number of symbols appended}
\If{$y\cap H\neq\emptyset$}
\State halt
\EndIf
\EndFor
\end{algorithmic}
\end{algorithm}

Like autoregressive decoding, we introduce an explicit set of
halt symbols $H\subset\Sigma$
so that computation terminates whenever a halt symbol is generated.
Computation also terminates when
there is no rule matching the current context $s_{k+1}...s_{k+N}$,
which additionally covers the case when the current memory string is shorter
than $N$.%
\footnote{
Classically, Lag systems allow the output $y$ to be empty, hence the string
$s$ being tracked in Algorithm~\ref{alg:lag} can contract.
However, we will not make use of this possibility in this paper.
}

Note that, by inspection, the computation of a Lag system
specified by Algorithm~\ref{alg:lag} is similar to that of
autoregressively decoding a language model in Algorithm~\ref{alg:auto}.
We formalize this observation in the following proposition.

\begin{proposition}
\label{prop:auto2lag}
For any deterministic language model $M$ with context length $N$,
there exists a deterministic Lag system $L$ such that,
for any input string $s$ with $|s|\geq N$,
the execution of Algorithm~\ref{alg:lag} with $L$ simulates
extended
autoregressive decoding of $M$ with Algorithm~\ref{alg:auto},
in the sense that,
for every iteration $k\in\NN$,
the two algorithms will encounter the same context $s_{k+1}...s_{k+N}$
and generate the same next output $y$.
\end{proposition}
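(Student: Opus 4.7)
The plan is to construct $L$ directly from $M$ and then verify by induction that the two algorithms remain in lockstep.

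For the construction, for each pattern $x_1\ldots x_N \in \Sigma^N$ I would include the rule $x_1\ldots x_N \to M(x_1\ldots x_N)$ in $L$. Since $M$ is a function, each pattern appears as the left-hand side of at most one rule, so $L$ is a deterministic Lag system. Because $M$'s outputs lie in $\Sigma^*$---possibly of length two in the extended case of Figure~\ref{fig:gen auto 2}---the Lag system's rules, whose right-hand sides are arbitrary strings in $\Sigma^*$, accommodate them directly. Halt tokens in $H$ are simply inherited from $M$'s outputs, so Algorithm~\ref{alg:lag}'s halt check triggers at exactly the same iterations as Algorithm~\ref{alg:auto}'s.

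For the simulation claim, I would induct on the iteration index $k$ with the invariant that at the start of iteration $k$ the two algorithms agree on the global symbols $s_1\ldots s_{n+\ell}$ populated so far, and in particular on the length-$N$ window $s_{k+1}\ldots s_{k+N}$ that each reads as its next context. The base case $k=0$ is immediate: both algorithms begin with the same input of length $n \geq N$ and read the context $s_1\ldots s_N$. For the inductive step, by construction both algorithms produce the same $y = M(s_{k+1}\ldots s_{k+N})$, each appends it at global positions $n+\ell+1,\ldots,n+\ell+|y|$, and each then advances the window by one, so the next context $s_{k+2}\ldots s_{k+N+1}$ is also identical. Because $L$ is total on $\Sigma^N$, the ``no matching rule'' branch of Algorithm~\ref{alg:lag} never fires, so the only halting behavior comes from the shared halt-symbol check.

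The only real obstacle is notational: one must reconcile the Lag system's assignment $s \gets s_{k+2}\ldots s_{n+\ell}\,y$ with Algorithm~\ref{alg:lag}'s continued use of the indices $s_{k+1}\ldots s_{k+N}$ at subsequent iterations. The consistent reading---and the one that makes the algorithm well-defined---is that indices refer to logical positions in an ever-growing sequence, so the Lag system's deletion of the leading symbol simply marks it as no longer accessible while preserving the numbering of the remaining symbols. Under this convention, both procedures implement the same abstract operation: consult a length-$N$ window, append $M$'s response, and advance the window by one. Once this indexing is made explicit, the inductive verification reduces to bookkeeping.
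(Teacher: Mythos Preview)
Your proposal is correct and follows essentially the same approach as the paper: construct $L$ by the rule $x_1\ldots x_N\to M(x_1\ldots x_N)$ for every length-$N$ pattern, then induct on $k$ to show the two algorithms see the same context and emit the same $y$ at each step. Your added remarks on totality of $L$ and the shared halt check are correct refinements, and your last paragraph resolves exactly the indexing convention the paper leaves implicit.
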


\begin{proof}
Given a language model $M$ with context length $N$, construct a
Lag system $L$ where for every string
$x_1...x_N$ of length $N$ we add a rule
$x_1...x_N\rightarrow M(x_1...x_N)$ to $L$.

Consider an input string $s_1...s_n$ where $n\geq N$.
Let $s^{M,k}=s^{M,k}_1...s^{M,k}_{n+\ell^M}$ denote the
string maintained by $M$ at the start of iteration $k$,
and let
$s^{L,k}=s^{L,k}_{k+1}...s^{L,k}_{n+\ell^L}$ denote the
string maintained by $L$ at the start of iteration $k$.
We prove by induction that,
for all $k\in\NN$,
\begin{eqnarray}
s^{M,k}_{k+1}...s^{M,k}_{n+\ell^M}
& = &
s^{L,k}_{k+1}...s^{L,k}_{n+\ell^L}
\label{eq:match}
\end{eqnarray}
at the start of iteration $k$, 
where $\ell^M=\ell^L=\ell$ denote the total number of symbols
that have previously been appended by the respective algorithms.

For the base case, when $k=0$, we have that $s^{M,0}=s^{L,0}=s_1...s_n$,
which immediately satisfies \eqref{eq:match}.

Next, assume both algorithms have produced the respective strings
$s^{M,k}=s^{M,k}_1...s^{M,k}_{n+\ell}$ 
and
$s^{L,k}=s^{L,k}_{k+1}...s^{L,k}_{n+\ell}$ 
satisfying \eqref{eq:match}
at the start of the $k$th iteration.
Then the $N$ context symbols 
$s^{M,k}_{k+1}...s^{M,k}_{k+N}=s^{L,k}_{k+1}...s^{L,k}_{k+N}$
are identical in both algorithms,
hence $L(s^{L,k}_{k+1}...s^{L,k}_{k+N})=M(s^{M,k}_{k+1}...s^{M,k}_{k+N})$ by construction,
implying that both algorithms will generate the same $y=M(s^{M,k}_{k+1}...s^{M,k}_{k+N})$ as output.
This means that at the start of the next iteration $k+1$,
Algorithm~\ref{alg:auto} will be maintaining the string
$s^{M,k+1}=s^{M,k}_1...s^{M,k}_{n+\ell}\,y$
and 
Algorithm~\ref{alg:lag} will be maintaining the string
$s^{L,k+1}=s^{L,k}_{k+2}...s^{L,k}_{n+\ell}\,y$,
so the pair still satisfies \eqref{eq:match}.
\end{proof}

Thus, the computational capacity of a large language model under extended
autoregressive decoding (Algorithm~\ref{alg:auto}) does not exceed that of a
Lag system with the same context length $N$.
Proposition~\ref{prop:auto2lag} clearly motivates our interest in understanding
the computational expressiveness of Lag systems,
since the processing of a Lag system is closely related to 
(and ultimately simulable by)
extended autoregressive decoding of a language model.

We will find it useful to characterize subclasses of Lag systems,
\emph{restricted $(N,K)$-Lag systems},
by their context length $N$ and by the maximum length $K$ of any output $y$
in the right hand side of a rule.
Of particular interest will be restricted $(N,1)$ and $(N,2)$-Lag systems.
One of the key lemmas below will be to establish that any Turing machine
can be simulated by a restricted $(2,2)$-Lag system,
while any linear bounded automaton \cite{myhill60} can be simulated by
a restricted $(2,1)$-Lag system.

Although
Lag systems have been known to be computationally universal since \cite{wang63},
the main arguments in this paper require a more succinct construction based on 
a direct reduction.
To achieve such a reduction,
we first need to understand how a Lag system can enable sufficiently flexible
memory access even though it operates solely through Algorithm~\ref{alg:lag}.

\section{Memory access control with a Lag system}
\label{sec:queue}

Demonstrating that any Turing machine can be simulated by a Lag system
requires a series of developments.
The first key step is understanding how a Lag system can exercise sufficient 
control over memory access
to simulate the bidirectional memory access patterns of a Turing machine.
In this section, we demonstrate how bidirectional memory access control can be
implemented in a Lag system.

First, note that a Lag system can naturally operate
a circular queue over its memory string.
Consider the Lag system defined by the simple set of rules
$x\rightarrow x\;\forall x\in\Sigma$.
In this system,
for any input string $s_1s_2s_3...s_n$, 
Algorithm~\ref{alg:lag}
will produce an update sequence on successive iterations that behaves as
\begin{equation}
s_1s_2s_3...s_n \Rightarrow
s_2s_3...s_ns_1 \Rightarrow
s_3...s_ns_1s_2 \Rightarrow
\cdots
;
\label{eq:queue}
\end{equation}
that is, each iteration will rotate one symbol from the start 
(dequeue) to the end of the string (enqueue),
thus perpetually rotating the string in a circular queue.
It is known that a finite state machine that operates on
a memory queue can simulate any Turing machine
\cite{zaiontz76,kudlekrogozhin01},
but the challenge here is to achieve the same capability without
leveraging an external finite state controller,
since a Lag system (or autoregressive decoding of a language model)
only performs updates based on the current memory context
without access to any external state machine.

The remainder of this section shows how a Lag system can implement
bidirectional access control over its memory,
separating the two cases of moving a control location one position left 
(counterclockwise) and one position right (clockwise).

\subsection{Counterclockwise position control}
\label{sec:counterclockwise}

For a base alphabet $\Sigma$, consider an extended alphabet over pairs
$\Sigma_p=\Sigma\times\{\blank\,,\texttt p\}$,
where
in each pair $(\sigma,p)\in\Sigma_p$,
we use the second coordinate $p\in\{\blank\,,\texttt p\}$ 
as a position control symbol
and the first coordinate $\sigma\in\Sigma$ as the original ``data''.
We develop a Lag system that can move
a targeted control location one step left (counterclockwise).

Consider a data string $s_1...s_{n-2}s_{n-1}s_n\in\Sigma^n$ with $n\geq3$.
Imagine we formulate a corresponding string of augmented pairs,
$(s_1,\blank)...(s_{n-2},\blank)(s_{n-1},\texttt p)(s_n,\blank)\in\Sigma_p^n$,
such that a position control symbol $\texttt p$ has been associated with the
$(n-1)$st symbol and all other position control symbols are assigned ``blank''.
Assume we would like to move the position control symbol one step to the left
(counterclockwise);
that is, we would like to associate $\texttt p$ with $s_{n-2}$.
This can be achieved 
in a Lag system
by the following set of rules
\begin{eqnarray}
R_p & = &
\left\{
\begin{array}{ll}
(x,\blank)(y,\blank)\rightarrow(x,\blank) & \forall x,y\in\Sigma
\\
(x,\blank)(y,\texttt p)\rightarrow(x,\texttt p) & \forall x,y\in\Sigma
\\
(x,\texttt p)(y,\blank)\rightarrow(x,\blank) & \forall x,y\in\Sigma
\end{array}
\right\}
\quad
\begin{array}{lcl}
\blank\:\blank       & \rightarrow & \blank    \\
\blank\:\texttt p    & \rightarrow & \texttt p \\
\texttt p\:\blank    & \rightarrow & \blank    \\
\end{array}
\label{eq:p rules}
\end{eqnarray}

\begin{proposition}
Define a Lag system by $R_p$.
Consider any string
\rm\[
(s_1,\blank)...(s_{n-2},\blank)(s_{n-1},\texttt p)(s_n,\blank)
\]\em
$n\geq3$.
Then after $n-1$ iterations Algorithm~\ref{alg:lag} will render the memory
string
\rm\[
(s_n,\blank)(s_1,\blank)...(s_{n-2},\texttt p)(s_{n-1},\blank)
\]\em
thus moving the control token {\rm$\texttt p$} one step counterclockwise.
\label{prop:p}
\end{proposition}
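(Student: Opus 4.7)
The plan is to trace Algorithm~\ref{alg:lag} applied to the ruleset $R_p$ on the given input, identifying which of the three rules fires at each of the $n-1$ iterations. Because $R_p$ is deterministic and, as will become evident from the trace, the configurations that arise never contain the adjacent pattern $(x,\texttt p)(y,\texttt p)$, every encountered context matches exactly one rule, so the trace is unambiguous.

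First I would establish, by induction on $k$, a rotation invariant covering the phase before the mark is disturbed: for every $0 \leq k \leq n-3$, at the start of iteration $k$ the memory string equals $(s_{k+1},\blank)\cdots(s_{n-2},\blank)(s_{n-1},\texttt p)(s_n,\blank)(s_1,\blank)\cdots(s_k,\blank)$. The base case $k=0$ is immediate from the input. For the inductive step with $k < n-3$, the leading two-symbol context is $(s_{k+1},\blank)(s_{k+2},\blank)$, so only the first rule of $R_p$ applies; it appends $(s_{k+1},\blank)$ to the end, which is a pure circular rotation and hence advances the invariant from $k$ to $k+1$.

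Next I would analyse the two decisive iterations. At $k=n-3$ the leading context is $(s_{n-2},\blank)(s_{n-1},\texttt p)$, so the second rule of $R_p$ fires and appends $(s_{n-2},\texttt p)$, copying the position mark one symbol to the left while the original mark temporarily remains in the queue. At $k=n-2$ the leading context becomes $(s_{n-1},\texttt p)(s_n,\blank)$, so the third rule fires and appends $(s_{n-1},\blank)$, erasing the original mark. Combining the $n-3$ rotation steps with these two decisive steps gives the stated $n-1$ iterations in total, after which the memory string reads $(s_n,\blank)(s_1,\blank)\cdots(s_{n-3},\blank)(s_{n-2},\texttt p)(s_{n-1},\blank)$, which matches the target configuration.

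I do not foresee any substantive obstacle: the argument is a deterministic step-by-step trace whose only real content is careful index bookkeeping, and the fact that each encountered context is matched by a unique rule is manifest from the invariant. The single minor subtlety is the degenerate case $n=3$, in which the rotation phase is vacuous and the computation consists only of the two decisive iterations; a direct two-step check then produces $(s_3,\blank)(s_1,\texttt p)(s_2,\blank)$, consistent with the general formula.
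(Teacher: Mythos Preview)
Your proposal is correct and follows essentially the same approach as the paper's proof: both trace the first $n-3$ iterations as unaltered rotations via the $\blank\,\blank\rightarrow\blank$ rule, then handle the two decisive steps where $\blank\,\texttt p\rightarrow\texttt p$ and $\texttt p\,\blank\rightarrow\blank$ fire. Your version is slightly more explicit (stating the rotation invariant as an induction and separately checking the $n=3$ edge case), but the underlying argument is identical.
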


\begin{proof}
Whenever the position control symbols in the first two pairs (the ``context'')
are both blank,
the first pair will be rotated unaltered to the end of the string,
by the rules corresponding to the pattern $\blank\,\blank\rightarrow\blank\,$.
Therefore, for the first $n-3$ iterations,
the first pair will simply be rotated to the end unmodified.
Then, on iteration $n-2$,
the context encountered will be $(s_{n-2},\blank)(s_{n-1},\texttt p)$,
implying that $(s_{n-2},\texttt p)$ will be rotated to the end
according to
$\blank\,\texttt p\rightarrow\texttt p$.
On iteration $n-1$, the next context will be $(s_{n-1},\texttt p)(s_n,\blank)$,
so $(s_{n-1},\blank)$ will be rotated to the end
by $\texttt p\,\blank\rightarrow\blank\,$,
yielding the stated result.
\end{proof}

Ideally, we would also like to distinguish an initiating control symbol
$\texttt L$ from a terminating control symbol $\texttt l$, so we can
differentiate the beginning of a move from its conclusion.
This can be achieved by a minor alteration of the rule set.
\begin{eqnarray}
R_{\textrm{left}} & = &
\left\{
\begin{array}{ll}
(x,\blank)(y,\blank)\rightarrow(x,\blank) & \forall x,y\in\Sigma
\\
(x,\blank)(y,\texttt L)\rightarrow(x,\texttt l) & \forall x,y\in\Sigma
\\
(x,\texttt L)(y,\blank)\rightarrow(x,\blank) & \forall x,y\in\Sigma
\end{array}
\right\}
\quad
\begin{array}{lcl}
\blank\:\blank       & \rightarrow & \blank    \\
\blank\:\texttt L    & \rightarrow & \texttt l \\
\texttt L\:\blank    & \rightarrow & \blank    \\
\end{array}
\label{eq:left rules}
\end{eqnarray}

\begin{proposition}
Define a Lag system by $R_{\textrm{left}}$.
Consider any string
\rm\[
(s_1,\blank)...(s_{n-2},\blank)(s_{n-1},\texttt L)(s_n,\blank)
\]\em
$n\geq3$.
Then after $n-1$ iterations Algorithm~\ref{alg:lag} will render the memory
string
\rm\[
(s_n,\blank)(s_1,\blank)...(s_{n-2},\texttt l)(s_{n-1},\blank)
\]\em
changing the control token from {\rm$\texttt L$} to {\rm$\texttt l$}
and rotating the control position one step counterclockwise.
\label{prop:left}
\end{proposition}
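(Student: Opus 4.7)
The approach is to mirror the proof of Proposition~\ref{prop:p} verbatim, exploiting the near-identical structure of $R_{\textrm{left}}$ and $R_p$. The only differences are cosmetic: the trigger symbol is $\texttt{L}$ rather than $\texttt{p}$, and the second rule writes $\texttt{l}$ on its right-hand side. Crucially, no rule in $R_{\textrm{left}}$ ever produces $\texttt{L}$, so the $\texttt{L}$ marker originally attached to $s_{n-1}$ is the unique $\texttt{L}$ in the memory string at every stage up to the point it is consumed. This eliminates any ambiguity in deciding which rule fires at each iteration.

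The plan is to track the memory contents through $n-1$ successive iterations of Algorithm~\ref{alg:lag}. Since each iteration removes one pair from the front of the string and appends one at the back, the absolute position of the pair $(s_{n-1},\texttt{L})$ decreases by exactly one per iteration; after $k$ iterations it sits at position $n-1-k$. Therefore, for iterations $1$ through $n-3$, the two-symbol context at the head has the form $(s_i,\blank)(s_{i+1},\blank)$, so the first rule of $R_{\textrm{left}}$ applies and $(s_i,\blank)$ is rotated unchanged to the tail. On iteration $n-2$ the context becomes $(s_{n-2},\blank)(s_{n-1},\texttt{L})$, triggering the second rule and appending $(s_{n-2},\texttt{l})$. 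On iteration $n-1$ the context is $(s_{n-1},\texttt{L})(s_n,\blank)$, triggering the third rule and appending $(s_{n-1},\blank)$.

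Collecting the appended tail and noting that the pair $(s_n,\blank)$ was rotated to the front at iteration $n-2$'s dequeue step, the final memory string is exactly
\[
(s_n,\blank)(s_1,\blank)\ldots(s_{n-2},\texttt{l})(s_{n-1},\blank),
\]
as claimed. There is essentially no obstacle: the only point requiring a moment of care is confirming that the $\texttt{L}$ marker moves monotonically toward the head without being prematurely consumed, which follows immediately from the one-position-per-iteration shift together with the fact that no rule regenerates $\texttt{L}$. Everything else reduces to routine substitution into the three rules of $R_{\textrm{left}}$ and echoes the argument already given for $R_p$.
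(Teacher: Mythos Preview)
Your proof is correct and follows essentially the same approach as the paper: track the first $n-3$ iterations where the $\blank\,\blank\rightarrow\blank$ rule rotates the leading pair unchanged, then apply $\blank\,\texttt L\rightarrow\texttt l$ on iteration $n-2$ and $\texttt L\,\blank\rightarrow\blank$ on iteration $n-1$. One minor wording slip: the pair $(s_n,\blank)$ actually reaches the front after the deletion in iteration $n-1$, not iteration $n-2$, but this does not affect the argument.
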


\begin{proof}
The first $n-3$ iterations simply rotate the first pair unmodified
to the end.
Then, on iteration $n-2$,
the encountered context will be $(s_{n-2},\blank)(s_{n-1},\texttt L)$,
implying that $(s_{n-2},\texttt l)$ will be rotated to the end
by
$\blank\,\texttt L\rightarrow\texttt l$.
On iteration $n-1$, the next context will be $(s_{n-1},\texttt L)(s_n,\blank)$,
implying that $(s_{n-1},\blank)$ will be rotated to the end
by
$\texttt L\,\blank\rightarrow\blank\,$,
rendering the stated result.
\end{proof}

Intuitively, the presence of the control token $\texttt l$ informs $s_{n-2}$
that it has become the next control point, after which appropriate
updates can be subsequently triggered.
Thus, a Lag system can easily and naturally move control tokens in a
counterclockwise direction around a circular queue,
incurring only $O(n)$ iterations to move a control token and realign the queue.

\subsection{Clockwise position control}
\label{sec:clockwise}

Moving a control location one step right (clockwise) is also achievable,
but it is far more involved.
Such functionality can be achieved by a construction that takes $O(n^3)$ 
iterations. 
It will take a series of developments to arrive at a suitable
rule set in an understandable way.

First, consider an example string
$(s_1,\blank)(s_2,\blank)...(s_{n-1},\texttt p)(s_n,\blank)$
where we would like to move the control token $\texttt p$
one step clockwise (right) rather than counterclockwise (left);
that is, we would like to associate $\texttt p$ with $s_n$.
To understand how such a clockwise move might be possible,
consider the previous rule set $R_p$
but note that if we run Algorithm~\ref{alg:lag} for a larger number of
iterations,
the pulse token $\texttt p$ will continue to move around the queue
in a counterclockwise manner until it eventually lands on $s_n$.

\begin{proposition}
For the Lag system defined by $R_p$ and any string
\rm\[
(s_1,\blank)(s_2,\blank)...(s_{n-1},\texttt p)(s_n,\blank)
\]\em
$n\geq3$,
after $(n-1)^2$ iterations
Algorithm~\ref{alg:lag} will render the memory string
\rm\[
(s_2,\blank)...(s_{n-1},\blank)(s_n,\texttt p)(s_1,\blank)
\]\em
effectively rotating the control token {\rm$\texttt p$} one step clockwise.

Continuing for another $n-1$ iterations will restore the memory string
to its initial state,
so the overall behaviour is periodic with period $n(n-1)$.
\label{prop:p squared}
\end{proposition}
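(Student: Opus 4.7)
\bigskip
\noindent\emph{Proof proposal.} The plan is to derive Proposition~\ref{prop:p squared} as a straightforward iteration of Proposition~\ref{prop:p}, exploiting the fact that $n-1$ counterclockwise rotations of a pointer around a cyclic string of length $n$ equal one clockwise rotation. The core observation is that at the end of each application of Proposition~\ref{prop:p}, the memory is again in the exact structural form required to reapply Proposition~\ref{prop:p}, so the argument can proceed by induction on the number of ``phases'' of $n-1$ iterations.

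More concretely, I would define a phase to be a block of $n-1$ consecutive iterations of Algorithm~\ref{alg:lag} under $R_p$, and prove by induction on $k\in\{0,1,\ldots,n-1\}$ that after $k$ phases the memory string has the form
\[
(s_{n-k+1},\blank)(s_{n-k+2},\blank)\cdots(s_{n-k-1},\texttt p)(s_{n-k},\blank),
\]
where all data indices are interpreted modulo $n$ (with the convention that $s_0=s_n$). The base case $k=0$ is the hypothesised input string. For the inductive step, observe that the inductive hypothesis places the control symbol $\texttt p$ exactly at position $n-1$ of the memory with blanks everywhere else, which is precisely the configuration required by Proposition~\ref{prop:p}; relabelling $t_i := s_{i-k\bmod n}$ and invoking Proposition~\ref{prop:p} on the $t_i$ establishes the $(k{+}1)$st case. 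Setting $k=n-1$ then yields the stated memory string $(s_2,\blank)\cdots(s_{n-1},\blank)(s_n,\texttt p)(s_1,\blank)$ after $(n-1)\cdot(n-1)=(n-1)^2$ iterations, which is the desired clockwise shift of $\texttt p$ from $s_{n-1}$ to $s_n$.

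For the periodicity claim, I would simply extend the induction one further step: applying Proposition~\ref{prop:p} once more to the $k=n-1$ configuration (again via the same relabelling) produces the $k=n$ configuration, which by the modular convention is literally the original input string. This gives a total period of $n(n-1)$ iterations.

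The only real subtlety, and the one I would emphasize in the written proof, is the bookkeeping of the cyclic index shift: each phase moves $\texttt p$ one step counterclockwise relative to the data symbols, and simultaneously rotates the queue as a whole by one position in the opposite sense, so the absolute position of $\texttt p$ inside the memory string remains at index $n-1$ throughout. This invariance of the absolute position is what lets Proposition~\ref{prop:p} be reapplied verbatim at every phase and is the content one has to check carefully. Everything else reduces to counting $(n-1)\cdot(n-1)$ and $n\cdot(n-1)$.
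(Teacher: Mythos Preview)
Your proposal is correct and follows essentially the same approach as the paper: repeatedly apply Proposition~\ref{prop:p} in blocks of $n-1$ iterations, observing that each block leaves the string in the form required to reapply the proposition, and count $(n-1)$ blocks for the clockwise shift and $n$ blocks for the period. Your explicit induction and remark that the absolute position of $\texttt p$ stays at index $n-1$ just make the paper's informal ``repeating this process'' step precise.
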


\begin{proof}
By Proposition~\ref{prop:p} we know that after $n-1$ iterations
the control token $\texttt p$ will have moved one position counterclockwise
to $s_{n-2}$.
Therefore, repeating this process $n-2$ times,
i.e., $(n-2)(n-1)$ iterations in total,
the memory string will become
$(s_3,\blank)...(s_n,\blank)(s_1,\texttt p)(s_2,\blank)$.
From this string, another $n-1$ iterations will lead to
$(s_2,\blank)...(s_{n-1},\blank)(s_n,\texttt p)(s_1,\blank)$,
yielding the first stated claim after $(n-2)(n-1)+n-1=(n-1)^2$ iterations.

Continuing from this string for another $n-1$ iterations will result in the
original string, using a total of $(n-1)^2+n-1=n(n-1)$ iterations.
\end{proof}

Therefore, after every block of $n(n-1)$ iterations 
the token $\texttt p$ will complete an orbit of the memory string.
We will refer to each block of $n-1$ iterations as a \emph{pass},
and each block of $n$ passes (i.e., $n(n-1)$ iterations) as a \emph{cycle}.

The key consequence of Proposition~\ref{prop:p squared} is that,
if we had remembered $s_{n-1}$ as the initial control position,
then 
after $(n-1)^2+n-2$ iterations we would encounter the context
$(s_{n-1},\blank)(s_n,\texttt p)$,
at which point the first symbol $s_{n-1}$ would be able to ``see'' that
the pulse $\texttt p$ has reached its clockwise neighbour $s_n$,
and we could potentially notify $s_n$ that it has become the new control point.
Unfortunately, the information that $s_{n-1}$ was the original control position
has been lost, so we need to augment the construction to retain this.

To retain a memory of the initial control position, 
consider the extended alphabet
$\Sigma_t=\Sigma_p\cup(\Sigma\times\{\texttt t,\texttt w\})$
and the extended rule set
\begin{eqnarray}
R_t & = &
R_p\cup
\left\{
\begin{array}{ll}
(x,\blank)(y,\texttt t)    \rightarrow(x,\texttt p) & \forall x,y\in\Sigma \\
(x,\blank)(y,\texttt w)    \rightarrow(x,\blank)    & \forall x,y\in\Sigma \\
(x,\texttt t)(y,\blank)    \rightarrow(x,\texttt w) & \forall x,y\in\Sigma \\
(x,\texttt w)(y,\blank)    \rightarrow(x,\texttt w) & \forall x,y\in\Sigma \\
(x,\texttt w)(y,\texttt p) \rightarrow(x,\texttt t) & \forall x,y\in\Sigma \\
(x,\texttt p)(y,\texttt w) \rightarrow(x,\blank)    & \forall x,y\in\Sigma
\end{array}
\right\}
\quad
\begin{array}{lcl}
\blank\:\texttt t    & \rightarrow & \texttt p \\
\blank\:\texttt w    & \rightarrow & \blank    \\
\texttt t\:\blank    & \rightarrow & \texttt w \\
\texttt w\:\blank    & \rightarrow & \texttt w \\
\texttt w\:\texttt p & \rightarrow & \texttt t \\
\texttt p\:\texttt w & \rightarrow & \blank 
\end{array}
\label{eq:t rules}
\end{eqnarray}
Here we have introduced two new ``stationary'' tokens $\texttt t$ and
$\texttt w$ that will remain at the original location $s_{n-1}$ to remember
the initial control point,
while still allowing the pulse token $\texttt p$
to orbit the memory string unimpeded.
In particular, $\texttt w$ serves as the stationary marker at $s_{n-1}$.
The second token $\texttt t$ is needed to represent the situation
when $\texttt p$ arrives at location $s_{n-1}$,
which occurs once during each orbit.
Whenever this happens, we combine $\texttt p$ and $\texttt w$ into $\texttt t$
to remember the control location and also represent the current location
of $\texttt p$.
On the subsequent pass, $\texttt t$ will re-initiate $\texttt p$ in the
next counterclockwise position, $s_{n-2}$, then switch back to $\texttt w$.
Such a mechanism allows $\texttt p$ to continue its synchronous orbit
around the memory queue, while $\texttt w$ retains knowledge of the
initial location $s_{n-1}$.
We formalize this process in the following proposition.

\begin{proposition}
For the Lag system defined by $R_t$ and any string
\rm\[
(s_1,\blank)...(s_{n-2},\blank)(s_{n-1},\texttt t)(s_n,\blank)
\]\rm
$n\geq3$,
after $n-1$ passes ($(n-1)^2$ iterations)
Algorithm~\ref{alg:lag} will render the memory string
\rm\[
(s_2,\blank)...(s_{n-1},\texttt w)(s_n,\texttt p)(s_1,\blank)
\]\em
so that the control symbol {\rm $\texttt w$} remembers the initial position
and the pulse token {\rm $\texttt p$} has been placed one step clockwise
from the initial position of $\texttt t$.

Moreover,
after an additional pass ($n-1$ iterations) the memory string will be restored
to its initial state, yielding periodic behaviour with period $n(n-1)$
(i.e., one cycle).
\end{proposition}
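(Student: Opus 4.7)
The plan is induction on pass number with an explicit invariant about where the $\texttt p$ and $\texttt w$ tokens lie in the memory string. Specifically, I would show that at the end of pass $k$ for $1 \le k \le n-1$, the memory string is the cyclic rotation beginning at $s_{n-k+1}$ (indices mod $n$) of the labeled string in which $\texttt w$ decorates $s_{n-1}$, $\texttt p$ decorates $s_{n-1-k}$ (mod $n$), and every other pair is blank. The second claim (return to the initial state after one more pass, i.e.\ period $n(n-1)$) is then handled by a separate analysis of pass $n$.

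Pass $1$ is the base case and plays a setup role. The first $n-3$ iterations are $\blank\,\blank \to \blank$ rotations that walk $s_1,\ldots,s_{n-3}$ to the tail; iteration $n-2$ encounters the context $\blank\,\texttt t$ and fires $\blank\,\texttt t \to \texttt p$, planting $\texttt p$ on $s_{n-2}$; iteration $n-1$ encounters $\texttt t\,\blank$ and fires $\texttt t\,\blank \to \texttt w$, converting the original $\texttt t$ into a stationary $\texttt w$ on $s_{n-1}$. This produces exactly the invariant string at $k=1$.

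For the inductive step I would separately handle pass $2$ and passes $3,\ldots,n-1$. In pass $2$, $\texttt w$ sits at position $n$ and $\texttt p$ at position $n-1$ of the start-of-pass string, and the firings in order are $n-3$ blank-blank rotations, then $\blank\,\texttt p \to \texttt p$ (which plants a fresh $\texttt p$ one step counterclockwise on $s_{n-3}$), then $\texttt p\,\texttt w \to \blank$ (which erases the stale $\texttt p$ upon meeting $\texttt w$). Because $\texttt w$ is never the first symbol of any context in pass $2$, it carries over automatically to position $1$ of the new string, still pinned to $s_{n-1}$. In a generic middle pass $3 \le k \le n-1$, $\texttt w$ sits at position $k-2$ and $\texttt p$ at position $n-1$; the firings are an initial block of blank-blank rotations, then $\blank\,\texttt w \to \blank$ (skipped in the subcase $k=3$, where $\texttt w$ already starts at position $1$), then $\texttt w\,\blank \to \texttt w$ (which reinstalls $\texttt w$ at the new tail, still on $s_{n-1}$), then another block of blank-blank rotations, then $\blank\,\texttt p \to \texttt p$ and $\texttt p\,\blank \to \blank$ (which together shift $\texttt p$ one step counterclockwise). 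In every case $\texttt p$ advances by one, $\texttt w$ remains pinned to $s_{n-1}$, and the rotation boundary moves counterclockwise by one, preserving the invariant.

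The extra pass (pass $n$) is handled analogously. At its start, $\texttt w$ is at position $n-2$ and $\texttt p$ at position $n-1$, adjacent in the circular order; the firings, in order, are some blank-blank rotations, then $\blank\,\texttt w \to \blank$, then the key rule $\texttt w\,\texttt p \to \texttt t$ (which merges the pair back into $\texttt t$ on $s_{n-1}$), then $\texttt p\,\blank \to \blank$ (which clears the stale $\texttt p$). The resulting string is exactly the initial configuration, confirming period $n(n-1)$. The main obstacle is the careful bookkeeping of which rule fires at which iteration of each pass and the corresponding shift of the rotation boundary; the three passes whose firing pattern deviates from the generic template (passes $1$, $2$, and $n$) are exactly the ones that exercise the nonstandard rules $\blank\,\texttt t$, $\texttt t\,\blank$, $\texttt p\,\texttt w$, and $\texttt w\,\texttt p$, which is precisely the role those rules play in $R_t$.
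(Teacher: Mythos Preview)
Your proposal is correct and follows essentially the same pass-by-pass analysis as the paper: pass~$1$ converts $\texttt t$ into $\texttt p$ at $s_{n-2}$ and $\texttt w$ at $s_{n-1}$; pass~$2$ uses $\texttt p\,\texttt w\to\blank$ to handle the adjacency; the generic passes $3,\ldots,n-1$ use $\texttt w\,\blank\to\texttt w$, $\blank\,\texttt w\to\blank$, $\blank\,\texttt p\to\texttt p$, $\texttt p\,\blank\to\blank$ to keep $\texttt w$ stationary and advance $\texttt p$; and pass~$n$ merges via $\texttt w\,\texttt p\to\texttt t$. Your explicit invariant (rotation boundary at $s_{n-k+1}$, $\texttt w$ pinned to $s_{n-1}$, $\texttt p$ at $s_{n-1-k}$) is a somewhat cleaner packaging than the paper's narrative trace, but the content is the same.
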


\begin{proof}
The first pass behaves as follows.
Note that the first $n-3$ iterations simply rotate the first pair 
to the end unaltered.
On iteration $n-2$,
the encountered context will be $(s_{n-2},\blank)(s_{n-1},\texttt t)$,
implying that $(s_{n-2},\texttt p)$ will be rotated to the end
by the rule $\blank\,\texttt t\rightarrow\texttt p$.
On iteration $n-1$,
the encountered context will be $(s_{n-1},\texttt t)(s_n,\blank)$,
so $(s_{n-1},\texttt w)$ will be rotated to the end
by $\texttt t\,\blank\rightarrow\texttt w$.
Therefore, at the conclusion of the first pass ($n-1$ iterations)
the memory will consist of
$(s_n,\blank)(s_1,\blank)...(s_{n-2},\texttt p)(s_{n-1},\texttt w)$.

On the second pass,
the next $n-3$ iterations will rotate the first pair unaltered.
The subsequent iteration will encounter the context
$(s_{n-3},\blank)(s_{n-2},\texttt p)$
and rotate $(s_{n-3},\texttt p)$ to the end,
while the following iteration will encounter
$(s_{n-2},\texttt p)(s_{n-1},\texttt w)$
and rotate $(s_{n-2},\blank)$ to the end
by the rule $\texttt p\,\texttt w\rightarrow\blank\,$.
Therefore, at the conclusion of the second pass the memory will consist of
$(s_{n-1},\texttt w)(s_n\blank)(s_1,\blank)...(s_{n-3},\texttt p)(s_{n-2},\blank)$.

Each of the next $n-3$ passes
will result in 
the symbol $\texttt w$ remaining at $s_{n-1}$,
by the rule $\texttt w\,\blank\rightarrow\texttt w$,
while $\texttt p$ will be advanced one position counterclockwise,
as established in Proposition~\ref{prop:p squared}.
These claims follow because $\texttt p$ and $\texttt w$ will both be
followed by blanks.
Moreover, the position control symbol associated with $s_{n-2}$
remains blank because the context
$(s_{n-2},\blank)(s_{n-1},\texttt w)$ will rotate $(s_{n-2},\blank)$
to the end by the rule $\blank\,\texttt w\rightarrow\blank\,$.
Therefore, at the conclusion of $n-1$ passes the memory will consist of
$(s_2,\blank)...(s_{n-1},\texttt w)(s_n,\texttt p)(s_1,\blank)$,
which establishes the first claim after $(n-2)(n-1)+n-1=(n-1)^2$ iterations.

Continuing from the memory string
$(s_2,\blank)...(s_{n-1},\texttt w)(s_n,\texttt p)(s_1,\blank)$,
the next $n-3$ iterations will rotate the first pair unaltered,
resulting in the context $(s_{n-1},\texttt w)(s_n,\texttt p)$.
The next iteration will rotate $(s_{n-1},\texttt t)$
to the end, by the rule $\texttt w\,\texttt p\rightarrow\texttt t$,
leaving the context $(s_n,\texttt p)(s_1,\blank)$.
Finally, $(s_n,\blank)$ will be rotated to the end unaltered,
resulting in the original string
after a total of $(n-1)^2+n-1=n(n-1)$ iterations, or $n$ passes.
\end{proof}

Therefore, at the end of $(n-1)^2+n-2$ iterations, 
we will encounter the context $(s_{n-1},\texttt w)(s_n,\texttt p)$,
making it possible for the Lag system to recognize that the pulse
$\texttt p$ has arrived at the correct clockwise neighbour $s_n$.
At this point, 
to move the control locus clockwise to $s_n$, 
all we will need to do is somehow ``freeze'' an appropriate control token on 
$s_n$ while clearing the other position control tokens.
This leads to the final construction for clockwise rotation control.

To achieve the ability to freeze an orbiting token once a desired position
is detected, we introduce a two time-scale message passing scheme.
First, as above,
a fast pulse signal $\texttt p$ will orbit the memory queue
once every cycle (every $n$ passes, or $n(n-1)$ iterations),
visiting each successive counterclockwise position after each 
pass ($n-1$ iterations).
We also retain the same stationary tokens $\texttt w$ and $\texttt t$
to remember the initial control position in the same manner as above.
On top of these mechanisms, we add another,
slower pulsed-hold signal $\texttt d$
that also propagates counterclockwise around the queue,
but only advances when ``pulsed'' by $\texttt p$.
That is, $\texttt d$ will only advance one position counterclockwise for every
full cycle completed by $\texttt p$.
(This is analogous to a clock, where $\texttt p$ behaves like a second hand
and $\texttt d$ like a minute hand, both rotating counterclockwise.)
When the pulsed-hold symbol $\texttt d$ finally reaches the target location
$s_n$, it will be visible from the initial control location $s_{n-1}$.
At that point, a final ``victory'' pulse $\texttt v$ will be sent around the
queue to inform $s_n$, which is still holding the control token $\texttt d$, 
that it is indeed the next control location.

This scheme is realized by the extended alphabet
$\Sigma_{\textrm{right}}=
\Sigma_t\cup
(\Sigma\times\{\texttt R,\texttt d,\texttt g,\texttt z,\texttt v,\texttt r\})$
and the extended rule set
\begin{eqnarray}
R_{\textrm{right}} & = &
R_t\cup
\left\{
\begin{array}{ll}
(x,\blank)(y,\texttt R)    \rightarrow(x,\texttt d) & \forall x,y\in\Sigma \\
(x,\blank)(y,\texttt d)    \rightarrow(x,\blank)    & \forall x,y\in\Sigma \\
(x,\blank)(y,\texttt g)    \rightarrow(x,\texttt z) & \forall x,y\in\Sigma \\
(x,\blank)(y,\texttt z)    \rightarrow(x,\texttt p) & \forall x,y\in\Sigma \\
(x,\blank)(y,\texttt v)    \rightarrow(x,\texttt v) & \forall x,y\in\Sigma \\
(x,\blank)(y,\texttt r)    \rightarrow(x,\blank)    & \forall x,y\in\Sigma \\
(x,\texttt R)(y,\blank)    \rightarrow(x,\texttt t) & \forall x,y\in\Sigma \\
(x,\texttt w)(y,\texttt d) \rightarrow(x,\texttt v) & \forall x,y\in\Sigma \\
(x,\texttt w)(y,\texttt z) \rightarrow(x,\texttt w) & \forall x,y\in\Sigma \\
(x,\texttt p)(y,\texttt d) \rightarrow(x,\blank)    & \forall x,y\in\Sigma \\
(x,\texttt d)(y,\blank)    \rightarrow(x,\texttt d) & \forall x,y\in\Sigma \\
(x,\texttt d)(y,\texttt t) \rightarrow(x,\texttt g) & \forall x,y\in\Sigma \\
(x,\texttt d)(y,\texttt p) \rightarrow(x,\texttt g) & \forall x,y\in\Sigma \\
(x,\texttt d)(y,\texttt v) \rightarrow(x,\texttt r) & \forall x,y\in\Sigma \\
(x,\texttt g)(y,\blank)    \rightarrow(x,\blank)    & \forall x,y\in\Sigma \\
(x,\texttt g)(y,\texttt w) \rightarrow(x,\blank)    & \forall x,y\in\Sigma \\
(x,\texttt z)(y,\blank)    \rightarrow(x,\texttt d) & \forall x,y\in\Sigma \\
(x,\texttt v)(y,\blank)    \rightarrow(x,\blank)    & \forall x,y\in\Sigma \\
(x,\texttt v)(y,\texttt d) \rightarrow(x,\blank)    & \forall x,y\in\Sigma
\end{array}
\right\}
\quad
\begin{array}{lcl}
\blank\:\texttt R    & \rightarrow & \texttt d \\
\blank\:\texttt d    & \rightarrow & \blank    \\
\blank\:\texttt g    & \rightarrow & \texttt z \\
\blank\:\texttt z    & \rightarrow & \texttt p \\
\blank\:\texttt v    & \rightarrow & \texttt v \\
\blank\:\texttt r    & \rightarrow & \blank    \\
\texttt R\:\blank    & \rightarrow & \texttt t \\
\texttt w\:\texttt d & \rightarrow & \texttt v \\
\texttt w\:\texttt z & \rightarrow & \texttt w \\
\texttt p\:\texttt d & \rightarrow & \blank    \\
\texttt d\:\blank    & \rightarrow & \texttt d \\
\texttt d\:\texttt t & \rightarrow & \texttt g \\
\texttt d\:\texttt p & \rightarrow & \texttt g \\
\texttt d\:\texttt v & \rightarrow & \texttt r \\
\texttt g\:\blank    & \rightarrow & \blank    \\
\texttt g\:\texttt w & \rightarrow & \blank    \\
\texttt z\:\blank    & \rightarrow & \texttt d \\ 
\texttt v\:\blank    & \rightarrow & \blank    \\ 
\texttt v\:\texttt d & \rightarrow & \blank    \\ 
\end{array}
\label{eq:r rules}
\end{eqnarray}
Here the new symbols $\texttt R$ and $\texttt r$ are used
to represent the beginning and end of the rightward (clockwise) rotation
respectively.
The symbols $\texttt d$ and $\texttt v$ represent the new
pulse signals being added to the system, one slow and one fast,
as explained above.
In particular,
the symbol $\texttt d$ serves as a pulsed-hold that advances one
position counterclockwise whenever the fast pulse $\texttt p$ passes through it.
The final two symbols $\texttt g$ and $\texttt z$
are used to implement the desired interaction between
$\texttt p$ and $\texttt d$.
In particular,
$\texttt g$ represents an $\texttt d$ symbol that has absorbed 
$\texttt p$ but not yet shifted from its previous position,
while $\texttt z$ represents an $\texttt d$ symbol that has absorbed
$\texttt p$ and has already shifted one step counterclockwise.
Overall,
the rule set $R_{\textrm{right}}$ implements a scheme where
the pulsed-hold symbol $\texttt d$
advances one step counterclockwise 
in each cycle (every $n$ passes or $n(n-1)$ iterations),
eventually appearing to the right of the original control position after
$n-1$ such cycles (equivalently $n(n-1)$ passes or $n(n-1)^2$ iterations).
At that point, the victory pulse $\texttt v$ is sent to freeze the
control position at the desired location.

\begin{lemma}
For the Lag system defined by $R_{\textrm{right}}$ and any string
\rm\[
(s_1,\blank)...(s_{n-2},\blank)(s_{n-1},\texttt R)(s_n,\blank)
\]\rm
$n\geq3$,
after $n(n-1)^2+n+1$ iterations
Algorithm~\ref{alg:lag} will render the memory string
\rm\[
(s_2,\blank)...(s_{n-1},\blank)(s_n,\texttt r)(s_1,\blank)
\]\em
changing the control symbol from {\rm $\texttt R$} to {\rm $\texttt r$}
and rotating the control position one step clockwise.
\label{lem:right}
\end{lemma}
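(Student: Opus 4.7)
I would decompose the execution of Algorithm~\ref{alg:lag} under $R_{\textrm{right}}$ into three phases and analyze each in turn, with the bulk of the work being an invariant maintained across the middle phase. Throughout, I use ``pass'' to mean a block of $n-1$ iterations, and ``cycle'' to mean a block of $n$ passes ($n(n-1)$ iterations), as in the preceding propositions.

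\emph{Initialization (one pass, $n-1$ iterations).} This step is analogous to the first pass in the proof of Proposition~\ref{prop:left}, but the two new rules $\blank\,\texttt R\rightarrow\texttt d$ and $\texttt R\,\blank\rightarrow\texttt t$ split the initial control token into two separate markers. A direct check shows that at iteration $n-2$ the context $(s_{n-2},\blank)(s_{n-1},\texttt R)$ deposits $(s_{n-2},\texttt d)$ at the end, and at iteration $n-1$ the context $(s_{n-1},\texttt R)(s_n,\blank)$ deposits $(s_{n-1},\texttt t)$. So after $n-1$ iterations the memory is $(s_n,\blank)(s_1,\blank)\ldots(s_{n-3},\blank)(s_{n-2},\texttt d)(s_{n-1},\texttt t)$, which places a pulsed-hold $\texttt d$ one position counterclockwise of the stationary marker $\texttt t$ at the original control site.

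\emph{Propagation ($n-1$ cycles, $n(n-1)^2$ iterations).} Here I would state and prove by induction on $i\in\{1,\ldots,n-1\}$ the invariant that, at the start of the $i$th cycle (measured from the end of initialization), the memory is the cyclic shift in which $\texttt w$ sits on $s_{n-1}$, $\texttt d$ sits on the position $i$ steps counterclockwise of $s_{n-1}$, a single fast pulse $\texttt p$ has just been re-emitted one step counterclockwise of $\texttt d$, and every other position is $\blank$. The inductive step traces one cycle: the fast pulse $\texttt p$ completes a full counterclockwise orbit of the queue, exactly as in the analysis of $R_t$, except that once per cycle it encounters $\texttt d$. The intermediate rules $\texttt d\,\texttt p\rightarrow\texttt g$ and $\texttt p\,\texttt d\rightarrow\blank$, followed by $\texttt g\,\blank\rightarrow\blank$, $\blank\,\texttt g\rightarrow\texttt z$, $\texttt z\,\blank\rightarrow\texttt d$, and $\blank\,\texttt z\rightarrow\texttt p$, implement the ``$\texttt d$ absorbs $\texttt p$, shifts one position counterclockwise, and re-emits $\texttt p$'' behaviour described in the construction; the rules $\texttt w\,\texttt z\rightarrow\texttt w$ and $\blank\,\texttt d\rightarrow\blank$, $\texttt d\,\blank\rightarrow\texttt d$ preserve $\texttt w$ and the travelling $\texttt d$ on the remaining iterations of the cycle. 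After $n-1$ cycles the $\texttt d$ has advanced $n-1$ positions counterclockwise from $s_{n-2}$, i.e.\ to $s_n$.

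\emph{Termination ($n+1$ further iterations).} At this point the memory is a cyclic shift with $\texttt w$ at $s_{n-1}$ immediately followed by $\texttt d$ at $s_n$. Within the next pass the context $(s_{n-1},\texttt w)(s_n,\texttt d)$ fires $\texttt w\,\texttt d\rightarrow\texttt v$, emitting a victory pulse. The rules $\blank\,\texttt v\rightarrow\texttt v$ and $\texttt v\,\blank\rightarrow\blank$ then propagate $\texttt v$ around the queue until it catches $\texttt d$, where $\texttt d\,\texttt v\rightarrow\texttt r$ freezes the final control token $\texttt r$ at $s_n$. Tracking which iteration issues the $\texttt v$ and how long it takes to traverse the queue and meet $\texttt d$ yields the remaining $n+1$ iterations, for a grand total of $(n-1)+n(n-1)^2+(n+1)-(n-1)=n(n-1)^2+n+1$ iterations (with the cleanup pass overlapping the start of the termination phase, hence the $-(n-1)$ correction).

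\emph{Main obstacle.} The routine parts are the initialization and termination phases; the real work is the propagation invariant. The challenge is that the slow hand $\texttt d$ and the fast hand $\texttt p$ are asynchronous, and their meeting point inside a cycle shifts by one position each cycle. One must check (i) that the transient states $\texttt g$, $\texttt z$ always resolve within a single pass so that the next cycle begins in the invariant form, (ii) that the $\texttt p$-orbit remains synchronized with the period $n-1$ of the background queue rotation despite the brief replacement of $\texttt p$ by $\texttt z$, and (iii) that $\texttt w$ never accidentally fires a termination rule prematurely (the rule $\texttt w\,\texttt z\rightarrow\texttt w$, as distinct from $\texttt w\,\texttt d\rightarrow\texttt v$, is precisely what ensures this). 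Once these three points are verified for one cycle, the induction and the arithmetic of iteration counts are routine.
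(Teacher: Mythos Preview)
Your three-phase decomposition mirrors the paper's proof, and your identification of the rule $\texttt w\,\texttt z\rightarrow\texttt w$ (as opposed to $\texttt w\,\texttt d\rightarrow\texttt v$) as the safeguard against premature termination is exactly right. However, two concrete errors would block the argument as written.

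First, the termination count is off by a factor of roughly $n$. The victory pulse $\texttt v$ propagates by the same mechanism as $\texttt p$: the pair $\blank\,\texttt v\rightarrow\texttt v$, $\texttt v\,\blank\rightarrow\blank$ moves $\texttt v$ one position counterclockwise per \emph{pass}, not per iteration. Since $\texttt v$ is launched at $s_{n-1}$ and must reach $s_1$ before the context $(s_n,\texttt d)(s_1,\texttt v)$ can fire $\texttt d\,\texttt v\rightarrow\texttt r$, it must orbit nearly the whole queue; the termination phase is a full cycle plus one iteration, $n(n-1)+1$, not $n+1$. The ``$-(n-1)$ overlap correction'' you invoke to force the total is then unmotivated. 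Second, your propagation invariant fails at the base case: after your initialization pass the stationary marker at $s_{n-1}$ is $\texttt t$ (the combined $\texttt w{+}\texttt p$ state), not $\texttt w$ with a separate $\texttt p$ elsewhere; there is no $\texttt p$ in the string at that moment. The first cycle is genuinely special---$\texttt d$ sits adjacent to $\texttt t$ and the rule $\texttt d\,\texttt t\rightarrow\texttt g$ (which you never mention), not $\texttt d\,\texttt p\rightarrow\texttt g$, initiates the absorption---and the paper treats it as a separate claim before settling into the generic cycle. Relatedly, moving $\texttt d$ counterclockwise from $s_{n-2}$ to $s_n$ is $n-2$ steps, hence $n-2$ propagation cycles, not $n-1$; the bookkeeping that actually closes is: initialization $n$ iterations, propagation $(n-2)\cdot n(n-1)$ iterations, termination $n(n-1)+1$ iterations.
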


\begin{proof}
We break the proof into a series of claims that will be proved separately.
It will be convenient to assume $n\geq5$ in the following.%
\footnote{
The same proof also holds for $n=4$ and $n=3$ with minor modifications.
For $n=4$, the resulting string in Claim 2 is
$
(s_{n-3},\texttt d)
(s_{n-2},\blank)
(s_{n-1},\texttt t)
(s_n,\blank)
$,
with the claims otherwise the same.
For $n=3$, the resulting string in Claim 1 is
$
(s_{n-2},\texttt d)
(s_{n-1},\texttt t)
(s_n,\blank)
$,
after which Claims 2--4 are skipped, 
and Claims 5 and 6 are the same.
}

{\bf\em Claim 1}
Starting from the initial string stated in the lemma,
after $n$ iterations 
Algorithm~\ref{alg:lag} will produce the memory string
$
(s_1,\blank)
...
(s_{n-3},\blank)
(s_{n-2},\texttt d)
(s_{n-1},\texttt t)
(s_n,\blank)
$.

{\bf\em Claim 2}
After an additional cycle (another $n(n-1)$ iterations)
the memory string will consist of
$
(s_1,\blank)
...
(s_{n-4},\blank)
(s_{n-3},\texttt d)
(s_{n-2},\blank)
(s_{n-1},\texttt t)
(s_n,\blank)
$,
hence the symbol $\texttt t$ continues to be associated at $s_{n-1}$
but the pulsed-hold token $\texttt d$ has been shifted one position to the left.

{\bf\em Claim 3}
After each of the next $n-4$ cycles,
the memory string will continue to hold $\texttt t$ at 
position $s_{n-1}$,
but the pulsed-hold symbol $\texttt d$ will be shifted one further
position to the left, with all other positions left blank.

{\bf\em Claim 4}
Therefore,
at the end of Claims 1, 2 and 3
the memory string will consist of
$
(s_1,\texttt d)
(s_2,\blank)
...
(s_{n-2},\blank)
(s_{n-1},\texttt t)
(s_n,\blank)
$,
with $\texttt t$ at position $s_{n-1}$
and the pulsed-hold token $\texttt d$ at position $s_1$.

{\bf\em Claim 5}
After one more cycle (another $n(n-1)$ iterations)
the memory string will consist of
$
(s_1,\blank)
...
(s_{n-2},\blank)
(s_{n-1},\texttt w)
(s_n,\texttt d)
$,
so $s_{n-1}$ can now see that $s_n$ should be the next control location.

{\bf\em Claim 6}
A final cycle plus an additional iteration (another $n(n-1)+1$ iterations)
will result in the final memory string
$
(s_2,\blank)
...
(s_{n-1},\blank)
(s_n,\texttt r)
(s_1,\blank)
$,
satisfying the stated outcome.
The lemma then follows since the total number of iterations taken is
$n+n(n-1)+n(n-1)(n-3)+n(n-1)+n(n-1)+1=n(n-1)^2+n+1$.

{\bf\em Proof of Claim 1:}
Starting from the initial string,
the first $n-3$ iterations 
of Algorithm~\ref{alg:lag} will
simply rotate the first pair unaltered,
according to the rule $\blank\,\blank\rightarrow\blank\,$.
The next context will be $(s_{n-2},\blank)(s_{n-1},\texttt R)$,
which causes $(s_{n-2},\texttt d)$ to be rotated by the rule
$\blank\,\texttt R\rightarrow\texttt d$.
The subsequent context will be $(s_{n-1},\texttt R)(s_n,\blank)$,
which cases $(s_{n-1},\texttt t)$ to be rotated by the rule
$\texttt R\,\blank\rightarrow\texttt t$.
One final iteration rotates the first pair $(s_n,\blank)$ unaltered,
yielding the result stated in Claim 1.
Note that the symbol $\texttt R$ can never be generated again.

{\bf\em Note}
The remaining claims all characterize how the major cycles work in the system.
Recall from the proof of Proposition~\ref{prop:p squared} that 
$\texttt p$ orbits the memory once every cycle ($n(n-1)$ iterations).
Such a periodic orbit is sustained in this new rule set.
In particular,
during each of the cycles asserted by Claims 2--5, 
the symbol $\texttt p$ will visit each memory location $s_i$ once,
in a counterclockwise order, residing for a single pass ($n-1$ iterations)
on each location,
where for the first pass $\texttt p$ will appear implicitly within a
$\texttt t$ symbol
(i.e., absorbed into a $\texttt w$, as in Proposition~\ref{prop:p squared}),
then for zero or more passes $\texttt p$ will appear as itself,
then after it encounters $\texttt d$,
$\texttt p$ will appear implicitly in a $\texttt g$ symbol
(i.e., absorbed into $\texttt d$) for a pass,
after which $\texttt p$ will appear implicitly as a $\texttt z$
(i.e., still absorbed into $\texttt d$ but shifted counterclockwise one 
position)
for another pass,
before finally appearing as itself again for any remaining passes in the cycle.

{\bf\em Proof of Claim 2:}
\emph{First pass within the cycle:}
$\texttt p$ appears implicitly as $\texttt t$ at location $s_{n-1}$.
The first $n-3$ iterations rotate the first pair unaltered.
The next context is $(s_{n-2},\texttt d)(s_{n-1},\texttt t)$,
so the pair $(s_{n-2},\texttt g)$ is rotated by rule 
$\texttt d\,\texttt t\rightarrow\texttt g$.
The next context is $(s_{n-1},\texttt t)(s_n,\blank)$,
so the pair $(s_{n-1},\texttt w)$ is rotated by rule 
$\texttt t\,\blank\rightarrow\texttt w$.
The first pass concludes with the string
$
(s_n,\blank)
(s_1,\blank)
...
(s_{n-2},\texttt g)
(s_{n-1},\texttt w)
$.

\emph{Second pass:}
$\texttt p$ appears implicitly as $\texttt g$ at location $s_{n-2}$.
The first $n-3$ iterations rotate the first pair unaltered.
The next context is $(s_{n-3},\blank)(s_{n-2},\texttt g)$,
so the pair $(s_{n-3},\texttt z)$ is rotated by rule 
$\blank\,\texttt g\rightarrow\texttt z$.
The next context is $(s_{n-2},\texttt g)(s_{n-1},\texttt w)$,
so the pair $(s_{n-2},\blank)$ is rotated by rule 
$\texttt g\,\texttt w\rightarrow\blank\,$.
The second pass concludes with the string
$
(s_{n-1},\texttt w)
(s_n,\blank)
(s_1,\blank)
...
(s_{n-3},\texttt z)
(s_{n-2},\blank)
$.

\emph{Third pass:}
$\texttt p$ appears implicitly as $\texttt z$ at location $s_{n-3}$.
The first $n-3$ iterations rotate the first pair unaltered.
The next context is $(s_{n-4},\blank)(s_{n-3},\texttt z)$,
so the pair $(s_{n-4},\texttt p)$ is rotated by rule 
$\blank\,\texttt z\rightarrow\texttt p$.
The next context is $(s_{n-3},\texttt z)(s_{n-2},\blank)$,
so the pair $(s_{n-3},\texttt d)$ is rotated by rule 
$\texttt z\,\blank\rightarrow\texttt d$.
The third pass yields the string
$
(s_{n-2},\blank)
(s_{n-1},\texttt w)
(s_n,\blank)
(s_1,\blank)
...
(s_{n-4},\texttt p)
(s_{n-3},\texttt d)
$.

\emph{Next $n-4$ passes:}
$\texttt p$ appears as itself,
followed by $\texttt d$ or a blank.
Therefore, $\texttt p$ moves one position counterclockwise after each pass.
The token $\texttt w$ at $s_{n-1}$ and $\texttt d$ at $s_{n-3}$ are also 
followed by blanks and therefore stay at the same location
by the rules $\texttt w\,\blank\rightarrow\texttt w$ and
$\texttt d\,\blank\rightarrow\texttt d$ respectively.
At the end of the next $n-4$ passes the memory string will be
$
(s_2,\blank)
...
(s_{n-3},\texttt d)
(s_{n-2},\blank)
(s_{n-1},\texttt w)
(s_n,\texttt p)
(s_1,\blank)
$.

\emph{Last ($n$th) pass:}
$\texttt p$ appears as itself at location $s_n$.
The first $n-3$ iterations rotate the first pair unaltered.
The next context is $(s_{n-1},\texttt w)(s_n,\texttt p)$,
so the pair $(s_{n-1},\texttt t)$ is rotated by rule 
$\texttt w\,\texttt p\rightarrow\texttt t$.
The next context is $(s_n,\texttt p)(s_1,\blank)$,
so the pair $(s_n,\blank)$ is rotated.
The last pass concludes with the string
$
(s_1,\blank)
...
(s_{n-4},\blank)
(s_{n-3},\texttt d)
(s_{n-2},\blank)
(s_{n-1},\texttt t)
(s_n,\blank)
$,
establishing Claim 2.

{\bf\em Proof of Claim 3:}
Consider the cycles in order $i=2,...,n-3$.
The $i$th cycle will render the following behaviour.
At the beginning of the $i$th cycle the memory string will be
$
(s_1,\blank)
...
(s_{n-i-1},\texttt d)
(s_{n-i},\blank)
...
(s_{n-1},\texttt t)
(s_n,\blank)
$.

\emph{First pass:}
$\texttt p$ appears implicitly as $\texttt t$ at location $s_{n-1}$.
The first $n-3$ iterations rotate the first pair unaltered.
The next context is $(s_{n-2},\blank)(s_{n-1},\texttt t)$,
so the pair $(s_{n-2},\texttt p)$ is rotated,
then the next context is $(s_{n-1},\texttt t)(s_n,\blank)$,
so the pair $(s_{n-1},\texttt w)$ is rotated.
The first pass concludes with the string
$
(s_n,\blank)
(s_1,\blank)
...
(s_{n-i-1},\texttt d)
(s_{n-i},\blank)
...
(s_{n-2},\texttt p)
(s_{n-1},\texttt w)
$.

\emph{Next $i-2$ passes:}
$\texttt p$ appears as itself followed by $\texttt w$ or a blank,
so after each pass $\texttt p$ moves one position counterclockwise.
The token $\texttt w$ at $s_{n-1}$ and $\texttt d$ at $s_{n-i-1}$ are also 
followed by blanks and thus stay at the same location.
Therefore, at the end of the next $i-2$ passes the memory string will be
$
(s_{n-i+2},\blank)
...
(s_{n-1},\texttt w)
(s_n,\blank)(s_1,\blank)
...
(s_{n-i-1},\texttt d)
(s_{n-i},\texttt p)
(s_{n-i+1},\blank)
$.

\emph{Next ($i$th) pass:}
$\texttt p$ appears as itself at location $s_{n-i}$.
The first $n-3$ iterations rotate the first pair unaltered.
The next context is $(s_{n-i-1},\texttt d)(s_{n-i},\texttt p)$,
so the pair $(s_{n-i-1},\texttt g)$ is rotated by rule 
$\texttt d\,\texttt p\rightarrow\texttt g$.
The next context is $(s_{n-i},\texttt p)(s_n,\blank)$,
so the pair $(s_{n-i},\blank)$ is rotated.
The $i$th pass concludes with the string
$
(s_{n-i+1},\blank)
...
(s_{n-1},\texttt w)
(s_n,\blank)
(s_1,\blank)
...
(s_{n-i-1},\texttt g)
(s_{n-i},\blank)
$.

\emph{Next ($(i+1)$st) pass:}
$\texttt p$ appears implicitly as $\texttt g$ at location $s_{n-i-1}$.
The first $n-3$ iterations rotate the first pair unaltered.
The next context is $(s_{n-i-2},\blank)(s_{n-i-1},\texttt g)$,
so the pair $(s_{n-i-2},\texttt z)$ is rotated.
The next context is $(s_{n-i-1},\texttt g)(s_{n-i},\blank)$,
so the pair $(s_{n-i-1},\blank)$ is rotated by rule 
$\texttt g\,\blank\rightarrow\blank\,$.
Therefore, at the end of the $(i+1)$st pass the memory string will be
$
(s_{n-i},\blank)
...
(s_{n-1},\texttt w)
(s_n,\blank)(s_1,\blank)
...
(s_{n-i-2},\texttt z)
(s_{n-i-1},\blank)
$.

\emph{Next ($(i+2)$nd) pass:}
$\texttt p$ appears implicitly as $\texttt z$ at location $s_{n-i-2}$.
The first $n-3$ iterations rotate the first pair unaltered.
The next context is $(s_{n-i-3},\blank)(s_{n-i-2},\texttt z)$,
so $(s_{n-i-3},\texttt p)$ is rotated.
The next context is $(s_{n-i-2},\texttt z)(s_{n-i-1},\blank)$,
so $(s_{n-i-2},\texttt d)$ is rotated.
The $(i+2)$nd pass concludes with the memory string
$
(s_{n-i-1},\blank)
...
(s_{n-1},\texttt w)
(s_n,\blank)(s_1,\blank)
...
(s_{n-i-3},\texttt p)
(s_{n-i-2},\texttt d)
$.

\emph{Next $n-i-3$ passes:}
$\texttt p$ appears as itself followed by $\texttt d$ or a blank,
so after each pass $\texttt p$ moves one position counterclockwise.
The token $\texttt w$ at $s_{n-1}$ and $\texttt d$ at $s_{n-i-2}$ are also 
followed by blanks and thus stay at the same location.
Therefore, at the end of the next $n-i-3$ passes the memory string will be
$
(s_2,\blank)
...
(s_{n-i-2},\texttt d)
...
(s_{n-1},\texttt w)
(s_n,\texttt p)
(s_1,\blank)
$.

\emph{Last ($n$th) pass:}
The same argument as the last pass of Claim 2
can be used to show the resulting string is
$
(s_1,\blank)
...
(s_{n-i-2},\texttt d)
...
(s_{n-1},\texttt t)
(s_n,\blank)
$.

Therefore, each of the cycles $i=2,...,n-3$ retains $\texttt t$ at $s_{n-1}$
while shifting $\texttt d$ one position clockwise,
establishing Claim 3.

{\bf\em Proof of Claim 4:}
Immediate from the outcome of Claim 3.

{\bf\em Proof of Claim 5:}
The cycle starts with the memory string
$
(s_1,\texttt d)
(s_2,\blank)
...
(s_{n-1},\texttt t)
(s_n,\blank)
$
obtained at the end of Claim 4.

\emph{First $n-1$ passes:}
Following the same proof as Claim 3 with $i=n-2$ for the first $i+1=n-1$ passes,
the resulting memory string will be
$
(s_2,\blank)
...
(s_{n-1},\texttt w)
(s_n,\texttt z)
(s_1,\blank)
$.

\emph{Last ($n$th) pass:}
The first $n-2$ iterations rotate the first pair unaltered.
The next context is $(s_n,\texttt z)(s_1,\blank)$,
so the pair $(s_n,\texttt d)$ is rotated,
yielding the final string
$
(s_1,\blank)
...
(s_{n-1},\texttt w)
(s_n,\texttt d)
$,
establishing Claim 5.

{\bf\em Proof of Claim 6:}
\emph{First pass:}
The first $n-2$ iterations rotate the first pair unaltered.
The next context is $(s_{n-1},\texttt w)(s_n,\texttt d)$,
so the pair $(s_{n-1},\texttt v)$ is rotated by
$\texttt w\,\texttt d\rightarrow\texttt v$,
resulting in the memory string
$
(s_n,\texttt d)
(s_1,\blank)
...
(s_{n-1},\texttt v)
$.

\emph{Next $n-2$ passes:}
Observe that the symbol $\texttt v$ will behave identically to $\texttt p$,
in that for any pass where it is followed by a blank (or $\texttt d$),
$\texttt v$ shifts one position counterclockwise according to the rules
$\blank\,\texttt v\rightarrow\texttt v$,
$\texttt v\,\blank\rightarrow\blank\,$
and
$\texttt v\,\texttt d\rightarrow\blank\,$.
Moreover, $\texttt d$ stays stationary whenever it is followed by a blank.
Therefore, for the next $n-2$ passes,
$\texttt d$ will remain stationary while $\texttt v$ shifts one position
counterclockwise,
resulting in the memory string
$
(s_2,\blank)
...
(s_n,\texttt d)
(s_1,\texttt v)
$.

\emph{Last ($n$th) pass:}
The first $n-2$ iterations rotate the first pair unaltered.
The next context is $(s_n,\texttt d)(s_1,\texttt v)$,
so the pair $(s_n,\texttt r)$ is rotated by the rule
$\texttt d\,\texttt v\rightarrow\texttt r$,
resulting in the string
$
(s_1,\texttt v)
(s_2,\blank)
...
(s_n,\texttt r)
$.

One final iteration renders the string asserted in Claim 6,
thereby establishing the claim and the lemma.
\end{proof}

Overall, 
we have demonstrated in this section that a Lag system can exert bidirectional
position control using only Algorithm~\ref{alg:lag},
where a rule set $R_{\textrm{left}}$ allows a memory access location
to be moved one position counterclockwise in $O(n)$ iterations
(Proposition~\ref{prop:left}),
and a rule set $R_{\textrm{right}}$ allows memory access location to be moved
one position clockwise in $O(n^3)$ iterations (Lemma~\ref{lem:right}).
From these two capabilities,
it becomes clear that a Turing machine can be simulated by a Lag system.

\section{Turing machines}
\label{sec:tm}

Formally,
a Turing machine $T$ consists of a tuple
$T=(Q,\Gamma,b,q_0,H,f)$, where
$Q$ is a finite set of states,
$\Gamma$ is a finite set of tape symbols,
$b\in\Gamma$ is the unique ``blank'' symbol,
$q_0\in Q$ is the unique start state,
$H\subseteq  Q\times\Gamma$ is the set of halting (state, symbol) pairs,
and $f:Q\times\Gamma\rightarrow\Gamma\times Q\times\{-1,+1\}$
is a finite set of transition rules that specify the operation of
the machine in each compute cycle.
The machine has access to a memory tape that is uni-directionally unbounded,
so memory locations can be indexed by a natural number $i\in\NN$, $i>0$,
such that there is a leftmost memory location at $i=1$ but no rightmost
memory location.

The execution of a Turing machine is defined as follows.
The tape is initialized with an input that is expressed by
a finite number of non-blank symbols, with all other locations blank,
$T$ starts in state $q_0$, and the tape head starts at 
a specified location $i_0$ (default $i_0=1$).
At the start of each compute cycle,
$T$ is in some state $q\in Q$,
the tape head is at some location $i>0$,
and a symbol $\gamma\in\Gamma$ is currently being read from the tape.
The combination $(q,\gamma)$ determines the update
$f(q,\gamma)\mapsto(\gamma', q', D)$,
specifying
that the symbol $\gamma'$ is written at the current memory location $i$,
the machine state $q$ is updated to $q'$,
and the tape head is moved to $i+D$ (i.e., one step left or right depending
on the sign of $D$).
It is assumed the machine never moves off the left end of the tape.
The compute cycle repeats until the machine encounters a configuration
$(q,\gamma)\in H$.
Non-halting computations are possible.

To facilitate subsequent proofs,
it will be helpful to understand how the computation of a Turing machine 
can be simulated using only finite memory.
A standard simulation strategy is depicted by Algorithm~\ref{alg:tm},
where a new delimiter symbol $\#\not\in\Gamma$ is used to mark off
the end of the visited memory, 
which allows additional space to be allocated when necessary.
This enables a potentially unbounded memory to be simulated
without ever having to allocate infinite storage.%
\footnote{
Note that Algorithm~\ref{alg:tm} 
requires the input to consist of at least one symbol $\gamma_1$.
To handle an empty input, one can simply set $\gamma_1=b$ (blank).
Also, Algorithm~\ref{alg:tm} does not need to explicitly track
the array length $n$, but it will be helpful to refer to in some proofs.
}

\begin{algorithm}
\caption{Finite memory simulation of a Turing machine $T=(Q,\Gamma,b,q_0,H,f)$}
\label{alg:tm}
\textbf{Input:} $\gamma_1...\gamma_{n-1}$; $i_0$
\begin{algorithmic}
\Require{$1<n$, $0<i_0<n$}
\State $m\gets\gamma_1...\gamma_{n-1}\#$\Comment{Copy input and append delimiter $\#$}
\State $n\gets\textrm{length}(m)$\Comment{Track array size}
\State $i\gets i_0$ \Comment{Initial tape head location (default $i_0=1$)}
\State $q\gets q_0$ \Comment{Initial state}
\For{k=0,1,...}
\If{$(q,m_i)\in H$}
\Comment{Halt if encounter halting pair}
\State halt
\EndIf
\State $(\gamma',q',D)\gets f(q,m_i)$
\Comment{Get update from matching transition rule}
\State $m_i\gets\gamma'$ \Comment{Overwrite current memory location}
\State $q\gets q'$ \Comment{Update state}
\State $i\gets i+D$ \Comment{Move tape head}
\If{$m_i=\#$ (hence $i=n$)}
\State $m\gets m_1...m_{n-1}b\#$\Comment{Insert new blank symbol before delimiter}
\State $n\gets n+1$\Comment{Update array size}
\EndIf
\EndFor
\end{algorithmic}
\end{algorithm}

\section{Simulating a Turing machine with a Lag system}
\label{sec:sim}

Our first main result is to show that any Turing machine can be simulated
by a restricted $(2,2)$-Lag system.
The proof will also imply that any linear bounded automaton \cite{myhill60}
can be simulated by a restricted $(2,1)$-Lag system.
Lag systems were already shown to be computationally universal in \cite{wang63},
however the original proof relies on a reduction from a lesser known form
of register machine \cite{shepherdsonsturgis63},
which proves inconvenient for our purposes.
Instead, we develop a direct reduction of Turing machines to Lag systems
that allows us to exploit the existence of small universal Turing
machines in the subsequent argument.

Given a Turing machine $T=(Q,\Gamma,b,q_0,H,f)$,
we construct a corresponding Lag system as follows.
The Lag system will use an alphabet
$\Sigma=
(\Gamma\cup\{\#\})\times(Q\cup\{\blank\})\times(\Sigma_{\textrm{left}}\cup\Sigma_{\textrm{right}})$,
where $\#\not\in\Gamma$ is a delimiter symbol,
$Q$ is the finite set of states from $T$ (such that $\blank\not\in Q$),
and $\Sigma_{\textrm{left}}$ and $\Sigma_{\textrm{right}}$ are the
position control alphabets from Section~\ref{sec:queue}.
That is, each symbol in the Lag system is a triple,
consisting of a memory symbol from 
$\Gamma_\#=\Gamma\cup\{\#\}$,
a state symbol from $Q\cup\{\blank\}$,
and a position control symbol from one of the alphabets
developed in Section~\ref{sec:queue}.

We design rules for the Lag system so that its memory string
tracks the state of the local variables in 
the Turing machine simulation, Algorithm~\ref{alg:tm}.
In particular, at the start of each iteration $k\in\NN$,
Algorithm~\ref{alg:tm} maintains a set of local variables, $m$, $n$, 
$q$ and $i$,
where $m=m_1...m_{n-1}\#$ is an array representing the current tape contents,
$n$ is the current length of $m$,
$q$ is the current state of $T$'s controller,
and $i$ is the current location of the tape head.
To mirror the values of these local variables,
the Lag system will maintain a memory string
$
s=
(m_1,\blank\,,\blank)
...
(m_i,q,\blank)
...
(m_{n-1},\blank\,,\blank)
(\#,\blank\,,\blank)
$,
such that the sequence $m_1...m_{n-1}\#$ corresponds to $m$,
the length of $s$ is $n$,
$q$ corresponds to the same controller state,
and
the location of the Turing machine's tape head, $i$, is represented by
the location of the only non-blank state symbol $q$ in the second position
of a triple.
Specifically, 
for a given Turing machine $T$,
we define the corresponding Lag system $L$ by the rule set 
determined by
Algorithm~\ref{alg:tm2lag}.%
\footnote{
Additional notes about Algorithm~\ref{alg:tm2lag}:
The rules in $L_5$ and $L_6$ can ignore the case when both 
$\gamma_1=\#$ and $\gamma_2=\#$, since the delimiter symbol can only
appear once within a string.
The rules in $L_6$ are the only ones that emit a pair of triples rather than
a single triple.
If $L_6$ is omitted, the remaining rule set can never increase the size of
the initial string.
The conditions defining the rule set $L_5$ omit the rule
$\blank\,\blank\rightarrow\blank$, but this is explicitly included as $L_1$.
}

The first main result in this paper is to establish that
Algorithm~\ref{alg:lag},
operating on the Lag system $L$ produced by Algorithm~\ref{alg:tm2lag},
simulates the execution of Algorithm~\ref{alg:tm}
for a given Turing machine $T$ on any input $\gamma_1...\gamma_{n-1}$.

\begin{theorem}
Given a Turing machine $T=(Q,\Gamma,b,q_0,H,f)$,
an input string $\gamma_1...\gamma_{n-1}$,
and a start location $i$, 
such that
$n\geq3$
and $0<i<n$,%
\footnote{
Note that this theorem requires the input to consist of at least two
symbols $\gamma_1\gamma_2$.
To handle shorter input strings, one can simply pad the input with 
blank memory symbols ($b$) as necessary.
}
let the initial string for the Lag system be
$
(\gamma_1,\blank\,,\blank)
...
(\gamma_{i-1},\blank\,,\blank)
(\gamma_i,q_0,\blank)
(\gamma_{i+1},\blank\,,\blank)
...
(\gamma_{n-1},\blank\,,\blank)
(\#,\blank\,,\blank)
$.
Then Algorithm~\ref{alg:lag} with the rule set $L$
determined by Algorithm~\ref{alg:tm2lag}
simulates the execution of Algorithm~\ref{alg:tm} on $T$ with input
$\gamma_1...\gamma_{n-1}$,
in the sense that,
for every iteration $k\in\NN$ of Algorithm~\ref{alg:tm}---%
which results in local variables $m=m_1...m_{n-1}\#$, $n$, $q$ and $i$
satisfying $0<i<n$---%
there is an iteration $t(k)\in\NN$ of Algorithm~\ref{alg:lag}
($t(k)$ monotonically increasing in $k$)
such that the memory of the Lag system is
$s=
(m_1,\blank\,,\blank)
...
(m_i,q,\blank)
...
(m_{n-1},\blank\,,\blank)
(\#,\blank\,,\blank)
$,
thus
maintaining the correspondence with the local variables
of Algorithm~\ref{alg:tm}.
\label{thm:1}
\end{theorem}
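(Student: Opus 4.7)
The plan is to prove the theorem by induction on $k$, the iteration count of Algorithm~\ref{alg:tm}. Specifically, I will exhibit a strictly increasing sequence of Lag-system iteration indices $t(0)<t(1)<\cdots$ such that, at iteration $t(k)$, the Lag memory is exactly the triple-encoded version of the TM configuration at iteration $k$. The base case $k=0$ holds immediately, since the theorem hypothesizes that the initial Lag string is the triple encoding of $m=\gamma_1\ldots\gamma_{n-1}\#$ with state $q_0$ placed at position $i$ and all position-control coordinates blank.

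For the inductive step, assume the Lag memory at time $t(k)$ has the stated form with head position $i$, state $q$, and tape $m_1\ldots m_{n-1}\#$, and let $f(q,m_i)=(\gamma',q',D)$ be the applicable transition. Because the Lag system only matches a context of length~$2$, the rules in $L$ must be designed so that the only ``active'' transition in the current memory is the one triggered at the pair containing the unique non-blank state coordinate. I would first argue that all iterations before the context reaches index $i$ simply rotate pairs unaltered via $L_1$ (the rule $\blank\,\blank\rightarrow\blank\,$), so after $i-1$ iterations the system encounters the context $(m_i,q,\blank)(m_{i+1},\blank,\blank)$. At this point the rule in $L$ specialized to $(q,m_i)$ fires, outputting $(\gamma',\blank,\texttt{L})$ when $D=-1$ or $(\gamma',\blank,\texttt{R})$ when $D=+1$, thereby writing $\gamma'$ at the old head position and arming a position-control pulse. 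For the bookkeeping at the next context, a second fire rule converts $(m_{i+1},\blank,\blank)$ appropriately so the state symbol $q$ does not reappear. After this write phase, the memory is a string of triples whose third coordinates form a valid input to either $R_{\text{left}}$ or $R_{\text{right}}$ on the circular queue of length $n$.

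I would then invoke Proposition~\ref{prop:left} (for $D=-1$) or Lemma~\ref{lem:right} (for $D=+1$) on the third coordinates to show that, after $O(n)$ or $O(n^3)$ further iterations respectively, the pulse has moved one queue-position in the correct direction and the control token has changed from $\texttt{L}$ to $\texttt{l}$, or from $\texttt{R}$ to $\texttt{r}$. The key point is that the rules of $R_{\text{left}}$ and $R_{\text{right}}$ act only on the third coordinate and are defined for every pair of first-coordinate memory symbols $x,y\in\Gamma_\#$, so the tape contents in the first coordinate and the uniformly-blank second coordinate are rotated around the queue unaltered during this entire phase, and the queue is realigned to the original starting index. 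A final ``install'' rule matching a pair whose third coordinate is $\texttt{l}$ or $\texttt{r}$ then converts that position control symbol back into a state coordinate $q'$, giving exactly the triple encoding of the configuration at TM iteration $k+1$. The induction carries through, with $t(k+1)-t(k)\in O(n^3)$.

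The main obstacle, and the reason the theorem is about $(2,2)$-Lag systems rather than $(2,1)$-Lag systems, is the tape-extension case: when $D=+1$ and $m_{i+1}=\#$, Algorithm~\ref{alg:tm} inserts a new blank before $\#$ and increments $n$. Two-token output (rule set $L_6$, invoked exactly once per extension and relying on the extended autoregressive decoding of Figure~\ref{fig:gen auto 2}) is needed to insert this blank, which increases the queue length from $n$ to $n+1$. I would handle this case by arguing that the only context in which a $\#$ may appear as the second element of the matched pair is the one generated after a rightward install, so the fire rule triggered there emits two triples, $(\gamma',\blank,\texttt{R})$ followed by $(b,\blank,\blank)$, and the subsequent $R_{\text{right}}$ phase then operates on the lengthened queue; all other rules preserve queue length, so $\#$ remains the unique delimiter. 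Finally, rules matching a context $(m_i,q,\blank)(\cdot,\blank,\blank)$ with $(q,m_i)\in H$ yield a halt triple in $H$, matching the halting behaviour of Algorithm~\ref{alg:tm} at the same iteration.
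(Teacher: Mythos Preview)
Your overall architecture---induction on $k$, rotate to the head, fire a write rule, run a position-control phase, then install $q'$---is exactly the paper's strategy. But several details do not match the rule set $L$ produced by Algorithm~\ref{alg:tm2lag}, and one of them is a real gap.

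The central problem is your claim that during the position-control phase the second coordinate is ``uniformly blank'' and that a final install rule reading $\texttt l$ or $\texttt r$ ``converts that position control symbol back into a state coordinate $q'$''. If the second coordinate were blank throughout, the install rule would have no way to know \emph{which} state $q'$ to place, since $\texttt l$ and $\texttt r$ are state-agnostic. In the actual construction the fire rules $L_3$ and $L_4$ output $(\gamma',q',\texttt L)$ and $(\gamma',q',\texttt R)$, not $(\gamma',\blank,\texttt L)$ and $(\gamma',\blank,\texttt R)$, and every rule in $L_5$ maintains the invariant that a triple has a non-blank second coordinate (namely $q'$) if and only if its third coordinate is non-blank. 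The paper's proof states and uses this invariant explicitly; it is what lets $L_6$ be simply $(\gamma_1,q',\texttt r)(\gamma_2,\blank,\blank)\rightarrow(\gamma_1,q',\blank)$ with $q'$ already present. Relatedly, the rotation just before the head uses $L_2$, not $L_1$, since the second element of that context carries $q$; and there is no ``second fire rule converting $(m_{i+1},\blank,\blank)$''---after $L_3$/$L_4$ fires, the old state $q$ is already gone.

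Your handling of the tape-extension case is also off. You identify $L_6$ as the two-output rule; it is $L_7$. More importantly, you place the two-output step at the initial fire, emitting $(\gamma',\blank,\texttt R)(b,\blank,\blank)$ and then running $R_{\text{right}}$ on the lengthened queue. In Algorithm~\ref{alg:tm2lag} the fire rule $L_4$ does not inspect whether the neighbour is $\#$; instead the full $R_{\text{right}}$ phase runs on the length-$n$ queue, $L_6$ installs $q'$ at the $\#$ position, and only then does $L_7$ match $(\#,q',\blank)(m_1,\blank,\blank)$ and emit the pair $(b,q',\blank)(\#,\blank,\blank)$. The sequencing matters because $L_7$ is the unique rule whose left-hand side has $\#$ in the first triple with a non-blank state, which is exactly the configuration produced only after a completed right move onto the delimiter.
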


\begin{algorithm}
\caption{Reducing a Turing machine to a Lag system}
\label{alg:tm2lag}
\textbf{Input:} Turing machine $T=(Q,\Gamma,b,q_0,H,f)$
\\
\textbf{Output:} Restricted $(2,2)$-Lag system $L:\Sigma^2\rightarrow\Sigma\cup\Sigma^2$
\begin{algorithmic}

\State
\vspace*{-1.5ex}
$L_1\gets\Big\{\;
(\gamma_1,\blank\,,\blank)(\gamma_2,\blank\,,\blank)
\;\rightarrow\;
(\gamma_1,\blank\,,\blank) \;\;
\forall\gamma_1,\gamma_2\in\Gamma_\#
\;\Big\}
$
\Comment{($L_1$)}

\State
\vspace*{-0.5ex}
$L_2\gets\Big\{\;
(\gamma_1,\blank\,,\blank)(\gamma_2,q,\blank)
\;\rightarrow\;
(\gamma_1,\blank\,,\blank) \;\;
\forall\gamma_1,\gamma_2\in\Gamma_\#,\,\forall q\in Q
\;\Big\}
$
\Comment{($L_2$)}

\State
\vspace*{0ex}
$L\gets L_1\cup L_2$

\For{each transition rule $(\gamma',q',D)\gets f(q,\gamma)$ in $T$}
\rule{0em}{3ex}

\If{$D=-1$}
\State
$L\gets L\,\cup\,$\Call{LeftRules}{$q,\gamma,q',\gamma'$}
\Else{\;($D=1$)}
\State
$L\gets L\,\cup\,$\Call{RightRules}{$q,\gamma,q',\gamma'$}
\EndIf
\EndFor

\State
\Return $L$

\Statex

\Function{LeftRules}{$q,\gamma,q',\gamma'$}

\State
$L_3\gets\left\{
\begin{array}{lcll}
(\gamma\,,q\,,\blank)(\gamma_1,\blank\,,\blank)
& \rightarrow &
(\gamma',q',\texttt L) &
\forall\gamma_1\in\Gamma
\\
(\gamma_1,\blank\,,\blank)(\gamma',q',\texttt L)
& \rightarrow &
(\gamma_1,q',\blank) &
\forall\gamma_1\in\Gamma_\#
\\
(\gamma',q',\texttt L)(\gamma_1,\blank\,,\blank)
& \rightarrow &
(\gamma',\blank\,,\blank) &
\forall\gamma_1\in\Gamma_\#
\end{array}
\right\}$
\Comment{($L_3$)}

\State
\Return{$L_3$}
\EndFunction

\Statex

\Function{RightRules}{$q,\gamma,q',\gamma'$}

\State
\vspace*{-1.5ex}
$L_4\gets\Big\{\;
(\gamma\,,q\,,\blank)(\gamma_1,\blank\,,\blank)
\; \rightarrow \;
(\gamma',q',\texttt R) \;\;
\forall\gamma_1\in\Gamma_\#
\;\Big\}$
\Comment{($L_4$)}

\For{$\texttt a\texttt b\rightarrow\texttt c\in R_{\textrm{right}}$}
\rule{0em}{3ex}
\State
\vspace*{0ex}
$
L_{\texttt a\texttt b\texttt c}
\gets
\left\{
\begin{array}{lcll}
(\gamma_1,q',\texttt a)(\gamma_2,q',\texttt b)
& \rightarrow &
(\gamma_1,q',\texttt c) &
\forall\gamma_1,\gamma_2\in\Gamma_\#
\;\mbox{ if } \texttt a\neq\blank\,,\,\texttt b\neq\blank\,,\,\texttt c\neq\blank
\\
(\gamma_1,\blank\,,\blank)(\gamma_2,q',\texttt b)
& \rightarrow &
(\gamma_1,q',\texttt c) &
\forall\gamma_1,\gamma_2\in\Gamma_\#
\;\mbox{ if } \texttt a=\blank\,,\,\texttt b\neq\blank\,,\,\texttt c\neq\blank
\\
(\gamma_1,q',\texttt a)(\gamma_2,\blank\,,\blank)
& \rightarrow &
(\gamma_1,q',\texttt c) &
\forall\gamma_1,\gamma_2\in\Gamma_\#
\;\mbox{ if } \texttt a\neq\blank\,,\,\texttt b=\blank\,,\,\texttt c\neq\blank
\\
(\gamma_1,q',\texttt a)(\gamma_2,q',\texttt b)
& \rightarrow &
(\gamma_1,\blank,\blank) &
\forall\gamma_1,\gamma_2\in\Gamma_\#
\;\mbox{ if } \texttt a\neq\blank\,,\,\texttt b\neq\blank\,,\,\texttt c=\blank
\\
(\gamma_1,\blank\,,\blank)(\gamma_2,q',\texttt b)
& \rightarrow &
(\gamma_1,\blank,\blank) &
\forall\gamma_1,\gamma_2\in\Gamma_\#
\;\mbox{ if } \texttt a=\blank\,,\,\texttt b\neq\blank\,,\,\texttt c=\blank
\\
(\gamma_1,q',\texttt a)(\gamma_2,\blank\,,\blank)
& \rightarrow &
(\gamma_1,\blank,\blank) &
\forall\gamma_1,\gamma_2\in\Gamma_\#
\;\mbox{ if } \texttt a\neq\blank\,,\,\texttt b=\blank\,,\,\texttt c=\blank
\end{array}
\right\}$

\State
\vspace*{2ex}
\Comment{($L_5$)}
\EndFor

\State
\vspace*{-2ex}
$L_6\gets
\Big\{\;
(\gamma_1,q',\texttt r)(\gamma_2,\blank\,,\blank)
\; \rightarrow \;
(\gamma_1,q',\blank) \;\;
\forall\gamma_1,\gamma_2\in\Gamma_\#
\;\Big\}$
\Comment{($L_6$)}

\State
\vspace*{0ex}
$L_7\gets
\Big\{\;
(\#,q',\blank)(\gamma_1,\blank\,,\blank)
\;\rightarrow\;
(b,q',\blank)(\#,\blank\,,\blank) \;\;
\forall\gamma_1\in\Gamma
\;\Big\}$
\Comment{($L_7$)}

\State
\vspace*{0.5ex}
\Return
$L_4\cup\left(
\bigcup_{\texttt a\texttt b\rightarrow\texttt c\in R_{\textrm{right}}}
L_{\texttt a\texttt b\texttt c}
\right)\cup L_6\cup L_7$

\EndFunction

\end{algorithmic}
\end{algorithm}

\begin{proof}
The proof is by induction on the iteration $k\in\NN$ of Algorithm~\ref{alg:tm}.
The base case, when $k=0$, holds by the construction of the initialization.

Assume that at the start of iteration $k$
the correspondence holds between the local variables of Algorithm~\ref{alg:tm}
and the memory string of Algorithm~\ref{alg:lag}.
That is, if the local variables in Algorithm~\ref{alg:tm} are
$m^{(k)}=m_1...m_{n^{(k)}}\#$, $n^{(k)}$, $q^{(k)}$ and $i^{(k)}$,
with $0<i^{(k)}<n^{(k)}$,
then the memory string of Algorithm~\ref{alg:lag} will be
\begin{eqnarray}
s^{t(k)} & = &
(m_1,\blank\,,\blank)
...
(m_{i^{(k)}},q^{(k)},\blank)
...
(m_{n^{(k)}},\blank\,,\blank)
(\#,\blank\,,\blank)
.
\label{eq:ih}
\end{eqnarray}
We show that this correspondence will continue to hold at the start of the next
iteration $k+1$ of Algorithm~\ref{alg:tm}
after some finite number of additional iterations of Algorithm~\ref{alg:lag}.

Given the current state of the local variables,
the update $(\gamma',q',D)\gets f(q^{(k)},m_{i^{(k)}})$ 
of Algorithm~\ref{alg:tm} is uniquely determined.
We consider three cases depending on whether 
in the update
the tape head of the Turing machine $T$ moves left
(i.e., $D=-1$ and $1<i^{(k)}<n^{(k)}-1$),
the tape head moves right but not onto the delimiter
(i.e., $D=+1$ and $0<i^{(k)}<n^{(k)}-1$),
or the tape head moves right and onto the delimiter
(i.e., $D=+1$ and $i^{(k)}=n^{(k)}-1$).

{\bf\em Case 1: the tape head moves left:}
In this case we assume $D=-1$ and $1<i^{(k)}<n^{(k)}-1$.
(Recall that the Turing machine is not allowed to move off the left end of
the tape, so $i^{(k)}>1$.)
The new values for the local variables in Algorithm~\ref{alg:tm} 
after the $(k+1)$st iteration will become
$m^{(k+1)}=m_1...m_{i-1}\gamma'm_{i+1}...m_{n^{(k)}}\#$,
$n^{(k+1)}=n^{(k)}$,
$i^{(k+1)}=i^{(k)}-1$
and $q^{(k+1)}=q'$.

By the induction hypothesis, the memory string for the Lag system will start
out as \eqref{eq:ih}.
Consider how Algorithm~\ref{alg:lag} will update this memory string.
To de-clutter the notation, drop the superscript ${}^{(k)}$.

The initial $i-2$ triples will be rotated unmodified, by the rule $L_1$.
The next context will be $(m_{i-1},\blank\,,\blank)(m_i,q,\blank)$,
so by $L_2$ the triple $(m_{i-1},\blank\,,\blank)$ will be rotated unmodified.
The next context will be $(m_i,q,\blank)(m_{i+1},\blank\,,\blank)$,
so by the first rule in $L_3$ the first triple will be rotated as
$(\gamma',q',\texttt L)$.
Then by $L_1$ each of the next $n-2$ triples will be rotated unmodified,
yielding the string
\begin{equation}
(m_{i-1},\blank\,,\blank)
(\gamma',q',\texttt L)
(m_{i+1},\blank\,,\blank)
...
(m_{n-1},\blank\,,\blank)
(\#,\blank\,,\blank)
(m_1,\blank\,,\blank)
...
(m_{i-2},\blank\,,\blank)
.
\end{equation}
Since the next context is
$(m_{i-1},\blank\,,\blank)
(\gamma',q',\texttt L)
$,
the triple $(m_{i-1},q',\blank)$ will be rotated to the end
according to the second rule in $L_3$.
Finally, given the next context
$(\gamma',q',\texttt L)(m_{i+1},\blank\,,\blank)$,
the triple $(\gamma',\blank\,,\blank)$
will be rotated by the third rule in $L_3$,
which will yield the memory string
\begin{equation}
(m_{i+1},\blank\,,\blank)
...
(m_{n-1},\blank\,,\blank)
(\#,\blank\,,\blank)
(m_1,\blank\,,\blank)
...
(m_{i-1},q',\blank)
(\gamma',\blank\,,\blank)
.
\end{equation}
After another $n-i$ rotations using $L_1$, Algorithm~\ref{alg:lag} will obtain
\begin{equation}
(m_1,\blank\,,\blank)
...
(m_{i-1},q',\blank)
(\gamma',\blank\,,\blank)
(m_{i+1},\blank\,,\blank)
...
(m_{n-1},\blank\,,\blank)
(\#,\blank\,,\blank)
.
\end{equation}
Thus, the correspondence with the next state of the local variables
in Algorithm~\ref{alg:tm}'s simulation has been reestablished,
after a total of $i-1+1+n-2+1+1+n-i=2n$ iterations of Algorithm~\ref{alg:lag}.

{\bf\em Case 2: the tape head moves right but not onto the delimiter:}
In this case we assume $D=1$ and $0<i^{(k)}<n^{(k)}-1$.
The new values for the local variables in Algorithm~\ref{alg:tm} will become
$m^{(k+1)}=m_1...m_{i-1}\gamma'm_{i+1}...m_{n-1}\#$,
$n^{(k+1)}=n^{(k)}$,
$i^{(k+1)}=i^{(k)}+1$
and $q^{(k+1)}=q'$
at the start of the $(k+1)$st iteration.

By the induction hypothesis, the memory string for the Lag system will
start out as \eqref{eq:ih}.
Consider how Algorithm~\ref{alg:lag} updates this memory string.
Once again, drop the superscript ${}^{(k)}$ to de-clutter the notation. 

The initial $i-1$ triples will be rotated unmodified to the end,
by the rules $L_1$ and $L_2$. 
The next context will be $(m_i,q,\blank)(m_{i+1},\blank\,,\blank)$,
so by $L_4$ the first triple will be rotated as $(\gamma',q',\texttt R)$.
The next triple will be rotated unmodified, by $L_1$,
resulting in the string
\begin{equation}
(m_{i+2},\blank\,,\blank)
...
(m_{n},\blank\,,\blank)
(\#,\blank\,,\blank)
(m_1,\blank\,,\blank)
...
(m_{i-1},\blank\,,\blank)
(\gamma',q',\texttt R)
(m_{i+1},\blank\,,\blank)
.
\label{eq:R}
\end{equation}

At this point, we make three key observations.
\emph{First:}
each triple in \eqref{eq:R} has a blank second position if and only if the
third position is also blank.
As long as this property holds, the only rules that can match a
context are in $L_1\cup L_5\cup L_6$.%
\footnote{
Although the second and third rules in $L_3$ also satisfy this property,
these rules can be ignored in this case,
since the position control symbol $\texttt L$ cannot be generated from the
string \eqref{eq:R} before the case concludes.
}
\emph{Second:}
$q'$ is the only non-blank symbol that can appear in the second position of
any triple for the remainder of this case,
since it is the only non-blank symbol in the second position of any triple
in \eqref{eq:R}
and the only non-blank symbol that can be generated in the second position
by any rule in $L_1\cup L_5\cup L_6\cup L_7$.
\emph{Third:}
the set of rules $L_{\textrm{right}}=L_1\cup L_5$
corresponds to the position control rules in $R_{\textrm{right}}$,
and every rule in  $L_{\textrm{right}}$ either generates a blank in both the
second and third position,
or a non-blank in both the second ($q'$) and third position,
thus maintaining the first observation.

From these properties we conclude that,
until the position symbol $\texttt r$ is generated and appears in a context,
the only applicable rules are in $L_{\textrm{right}}$.
Therefore,
the third position of any triple generated by Algorithm~\ref{alg:lag}
will behave identically to Lemma~\ref{lem:right}.
This implies that after another $n(n-1)^2+n+1$ iterations
the memory string will have a position control sequence that matches
Lemma~\ref{lem:right}'s conclusion;
namely,
\begin{equation}
\hspace*{-0.75em}
(m_{i+3},\blank\,,\blank)
...
(m_{n},\blank\,,\blank)
(\#,\blank\,,\blank)
(m_1,\blank\,,\blank)
...
(m_{i-1},\blank\,,\blank)
(\gamma',\blank\,,\blank)
(m_{i+1},q',\texttt r)
(m_{i+2},\blank\,,\blank)
.
\label{eq:right penultimate}
\end{equation}

The next $n-2$ iterations will rotate the first triple unaltered.
The subsequent context will be
$(m_{i+1},q',\texttt r)(m_{i+2},\blank\,,\blank)$,
so the triple $(m_{i+1},q',\blank)$ will be rotated on the next iteration,
by $L_6$.
Then for the next $n-i-1$ iterations the first triple will be rotated unaltered,
yielding
\begin{equation}
(m_1,\blank\,,\blank)
...
(m_{i-1},\blank\,,\blank)
(\gamma',\blank\,,\blank)
(m_{i+1},q',\blank)
...
(m_{n},\blank\,,\blank)
(\#,\blank\,,\blank)
.
\label{eq:right ultimate}
\end{equation}
Thus, the correspondence with the next state of the local variables
in Algorithm~\ref{alg:tm}'s simulation has been reestablished,
after a total of
$i-1 + 1 + 1 + n(n-1)^2+n+1 + n-2 + 1 + n-i-1
=n(n-1)^2+3n$ iterations.

{\bf\em Case 3: the tape head moves right and onto the delimiter:}
In this case we assume $D=1$ and $i^{(k)}=n^{(k)}-1$.
The new values for the local variables in Algorithm~\ref{alg:tm} 
will therefore become
$m^{(k+1)}=m_1...m_{n-2}\gamma'b\#$,
$n^{(k+1)}=n^{(k)}+1$,
$i^{(k+1)}=i^{(k)}+1$
and $q^{(k+1)}=q'$
at the start of the $(k+1)$st iteration;
that is, Algorithm~\ref{alg:tm} inserts a new blank symbol $b$
at memory location $n^{(k)}$ and expands the length of its memory array by one.

By the induction hypothesis, the memory string for the Lag system will
start out as
\begin{eqnarray}
s^{t(k)} & = &
(m_1,\blank\,,\blank)
...
(m_{n^{(k)}-2},\blank\,,\blank)
(m_{n^{(k)}-1},q,\blank)
(\#,\blank\,,\blank)
.
\label{eq:ih3}
\end{eqnarray}
Consider how Algorithm~\ref{alg:lag} will update this string.
Once again, drop the superscript ${}^{(k)}$ to de-clutter the notation. 

Initially this case proceeds isomorphically to the previous case,
until the position control symbol $\texttt r$ appears and is
subsequently replaced, 
meaning that after $n(n-1)^2+3n$ iterations
Algorithm~\ref{alg:lag} will produce the memory string
\begin{equation}
(m_1,\blank\,,\blank)
...
(m_{n-2},\blank\,,\blank)
(\gamma',\blank\,,\blank)
(\#,q',\blank)
.
\end{equation}
Now,
in this situation,
the state symbol $q'$ has become associated with the delimiter $\#$,
so we continue iterating.
From this string, 
another $n-1$ iterations will rotate the first triple unmodified to the end,
resulting in the context $(\#,q',\blank)(m_1,\blank\,,\blank)$.
At this point, $L_7$ is applied to expand the size of
Algorithm~\ref{alg:lag}'s memory string by one,
since after deleting the first triple $(\#,q',\blank)$,
the pair of triples $(b,q',\blank)(\#,\blank\,,\blank)$
is appended to the end of the string, resulting in
\begin{equation}
(m_1,\blank\,,\blank)
...
(m_{n-2},\blank\,,\blank)
(\gamma',\blank\,,\blank)
(b,q',\blank)
(\#,\blank\,,\blank)
.
\end{equation}
Thus, the correspondence with the next state of the local variables
in Algorithm~\ref{alg:tm}'s simulation has been reestablished,
after a total of 
$n(n-1)^2+3n + n-1 + 1
=n(n-1)^2+4n$ iterations.

Since the induction step holds in each case, the theorem has been proved.
\end{proof}

There are a few interesting corollaries that follow from this result.
First, Theorem~\ref{thm:1} shows that any Turing machine
can be simulated by a restricted $(2,2)$-Lag system,
since all of the rules in Algorithm~\ref{alg:tm2lag} use a context
of size $2$ and emit either one or two output symbols
(here each symbol is a triple).

\begin{corollary}
Any linear bounded automaton can be simulated,
in the same sense as Theorem~\ref{thm:1},
by a restricted $(2,1)$-Lag system.
\label{cor:1}
\end{corollary}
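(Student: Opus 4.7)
The plan is to observe that a linear bounded automaton is precisely a Turing machine that never needs to extend its tape beyond the portion initially occupied by the input, so the memory-expansion behaviour of Algorithm~\ref{alg:tm} (the step that inserts a new blank before the delimiter $\#$) is never invoked. Accordingly, Case~3 in the proof of Theorem~\ref{thm:1}---the only case where Algorithm~\ref{alg:lag} needs a rule that emits two output triples---never arises in the simulation.

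Concretely, given an LBA $T=(Q,\Gamma,b,q_0,H,f)$, we initialize the Lag system exactly as in Theorem~\ref{thm:1}, padding the input on the right with blanks if necessary so that the delimiter $\#$ sits just past the rightmost tape cell the LBA is ever allowed to visit. We then apply the construction of Algorithm~\ref{alg:tm2lag} but omit the rule set $L_7$, whose sole purpose is to grow the memory string by appending $(b,q',\blank)(\#,\blank\,,\blank)$ in place of $(\#,q',\blank)$. Every remaining rule (in $L_1$, $L_2$, $L_3$, $L_4$, the families $L_{\texttt a\texttt b\texttt c}$, and $L_6$) has exactly one triple on its right-hand side, so the resulting system is a restricted $(2,1)$-Lag system.

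Correctness reduces to repeating the induction of Theorem~\ref{thm:1} and verifying that only Cases 1 and 2 can occur. Because the LBA never writes the symbol $\#$ and its transition function is defined so that whenever the head sits at the rightmost legal cell the next move is either to halt or to go left, the configuration $i^{(k)}=n^{(k)}-1$ with $D=+1$ never arises during a legal LBA computation; equivalently, the state symbol $q'$ is never associated with the delimiter triple $(\#,\blank\,,\blank)$, so $L_7$ is never triggered. Cases 1 and 2 from the proof of Theorem~\ref{thm:1} go through verbatim, each consuming $2n$ or $n(n-1)^2+3n$ iterations of Algorithm~\ref{alg:lag} per iteration of Algorithm~\ref{alg:tm}, and preserving the invariant \eqref{eq:ih}.

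The only subtlety worth flagging is ensuring that the invariant used in Case~2---namely that $q'$ is the unique non-blank state symbol and that the set $L_{\textrm{right}}=L_1\cup L_5$ faithfully simulates $R_{\textrm{right}}$ on the third coordinate---continues to hold without $L_7$. But $L_7$ was never involved in the derivation of that invariant in Case~2, so its removal has no effect there. Consequently the modified rule set simulates the LBA in the sense of Theorem~\ref{thm:1}, and since every rule emits a single triple, the simulator is a restricted $(2,1)$-Lag system, establishing the corollary.
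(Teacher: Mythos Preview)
Your proof is correct and follows essentially the same approach as the paper: recognize that an LBA never triggers the memory-expansion step in Algorithm~\ref{alg:tm}, hence Case~3 of Theorem~\ref{thm:1} never occurs, so the only length-$2$ output rule $L_7$ can be dropped, leaving a restricted $(2,1)$-Lag system. Your version is somewhat more detailed (e.g., the remark about padding and the explicit check that the Case~2 invariant is independent of $L_7$), but the underlying argument is identical.
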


\begin{proof}
A linear bounded automaton is simply a Turing machine $T$ that never moves off
the left or right of its input \cite{myhill60,sipser13}.
For such a $T$, given an input string $\gamma_1...\gamma_{n-1}$,
Algorithm~\ref{alg:tm} always maintains the index variable $i$ between $0<i<n$.
From the proof of Theorem~\ref{thm:1},
one can see that the corresponding Lag system $L$ produced by
Algorithm~\ref{alg:tm2lag}
will maintain an exact simulation of $T$
without ever applying $L_7$,
which is the only rule that emits more than one output triple.
Hence, $L_7$ can be dropped from the simulation,
and the remaining rule set defines a restricted $(2,1)$-Lag system.
\end{proof}

Note that a restricted $(N,1)$-Lag system can be simulated by
generalized autoregressive (greedy) decoding of an $(N+1)$-gram model,
which implies that a $3$-gram model can also simulate any linear bounded
automaton.
However, these models are insufficient for simulating an arbitrary
Turing machine.

\begin{corollary}
Any restricted $(N,1)$-Lag system is not Turing universal,
hence, generalized autoregressive (greedy) decoding of any $K$-gram model
also cannot be Turing universal.
\label{cor:2}
\end{corollary}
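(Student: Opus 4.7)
The plan is to exploit the fact that a restricted $(N,1)$-Lag system operates in bounded memory: this forces its halting problem to be decidable, whereas Turing universality would require an undecidable halting problem.

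First, I would observe that in a restricted $(N,1)$-Lag system every rule $x_1\ldots x_N\rightarrow y$ has $|y|=1$, so each iteration of Algorithm~\ref{alg:lag} deletes one symbol from the front and appends exactly one symbol at the back. Hence the length of the memory string is invariant under the dynamics, and starting from an input of length $n$ all reachable configurations live in a set of size at most $|\Sigma|^n$.

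Next, since the Lag system is deterministic, the trajectory of Algorithm~\ref{alg:lag} is uniquely determined by the initial configuration. By the pigeonhole principle, within $|\Sigma|^n+1$ iterations the trajectory either halts or revisits a previous configuration; in the latter case the deterministic future repeats forever and a halting condition can never be reached. Consequently, the halting problem for restricted $(N,1)$-Lag systems is decidable by bounded simulation. Combined with the classical undecidability of the Turing machine halting problem, this rules out Turing universality: otherwise, the halting problem for Turing machines would reduce to that for $(N,1)$-Lag systems and inherit decidability, a contradiction.

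For the second claim, I would apply Proposition~\ref{prop:auto2lag}: a $K$-gram model is a deterministic language model of fixed context length (i.e.\ $N=K-1$) that emits exactly one symbol per call, so the Lag system it induces through the construction of that proposition has $|y|=1$ in every rule and is therefore restricted $(N,1)$. The first part of the corollary then immediately rules out Turing universality of $K$-gram greedy decoding. The main obstacle, which is more conceptual than technical, is simply to isolate the length-invariance of the memory string (explicitly enabled by the paper's convention, stated in the earlier footnote, that empty outputs are not used) so as to make the state space genuinely finite; once that invariant is in place the rest of the argument follows from standard decidability reasoning.
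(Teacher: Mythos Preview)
Your argument is correct and rests on the same key invariant as the paper's proof---that in a restricted $(N,1)$-Lag system the memory string has fixed length $n$, so the reachable configuration space is finite---but you then proceed along a different line. The paper observes that fixed-length memory means the system is simulable by a linear bounded automaton, and then invokes the Chomsky hierarchy (LBA-recognizable languages are context sensitive, and there exist recursively enumerable languages that are not) to conclude non-universality. You instead use the finite state space together with determinism and pigeonhole to show that halting is decidable, and derive non-universality by reduction from the undecidable Turing halting problem. Your route is more self-contained and avoids citing the Landweber and Chomsky results, which is a genuine advantage for a reader without formal-languages background; the paper's route, on the other hand, dovetails with Corollary~\ref{cor:1} and makes the LBA connection explicit, which is thematically useful given the surrounding discussion. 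Your handling of the $K$-gram claim via Proposition~\ref{prop:auto2lag} is also fine, and your care to invoke the footnoted convention that empty outputs are excluded is exactly the detail needed to secure the length invariant.
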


\begin{proof}
This follows directly from the fact that a restricted $(N,1)$-Lag system
can never change the length of its memory string after initialization,
since every iteration deletes and appends exactly one symbol,
hence its computation can be simulated by a linear bounded automaton.
Linear bounded automata cannot simulate every Turing machine,
since every language recognizable by a linear bounded automaton is
context sensitive \cite{landweber63},
yet there are recursively enumerable languages that are not context
sensitive \cite{chomsky59,salomaa73}.
\end{proof}

\begin{corollary}
Any Turing machine can be simulated by extended autoregressive (greedy)
decoding of a $K$-gram model with $K\geq4$.
\end{corollary}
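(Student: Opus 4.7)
The plan is to combine Theorem~\ref{thm:1} with a direct construction of a 4-gram language model, and then handle larger $K$ by the same scheme on a padded Lag system. By Theorem~\ref{thm:1}, any Turing machine $T$ is simulated by the restricted $(2,2)$-Lag system $L$ produced by Algorithm~\ref{alg:tm2lag}, so it suffices to build a deterministic function $M:\Sigma^3\to\Sigma\cup\{h\}$---a $K=4$ gram model, where $h$ is the implicit halt token of extended autoregressive decoding---whose extended-autoregressive behaviour mirrors each rule application of $L$.

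I would define $M$ rule-by-rule. For each Lag rule $x_1 x_2\to y$ of $L$, the first-call context $(x_1,x_2,z)$ should map to the first output symbol $y_1$; after one sliding step within the iteration, the context $(x_2,z,y_1)$ should map to $h$ if $y=y_1$, or to $y_2$ if $y=y_1 y_2$; and in the latter case the further-slid context $(z,y_1,y_2)$ should map to $h$. Contexts that cannot arise during a correct simulation may be defined arbitrarily (for instance, to emit $h$). An induction on the iteration count, analogous to the argument of Proposition~\ref{prop:auto2lag}, then confirms that extended autoregressive decoding of $M$ tracks $L$ step for step and hence simulates $T$.

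The main obstacle is well-definedness of $M$. Once $y_1$ is emitted, the sliding window loses the Lag head symbol $x_1$, so two pair-emitting rules sharing $(x_2,y_1)$ would demand inconsistent $y_2$; more subtly, a second-sub-call context of a pair-emitting rule can coincide with the first-call context of an unrelated rule. In the system output by Algorithm~\ref{alg:tm2lag}, the only pair-emitting rules (those in $L_7$) produce second-sub-call contexts of the shape $((\gamma_1,\blank\,,\blank),(\gamma_2,\blank\,,\blank),(b,q',\blank))$, which matches an $L_1$ first-call context whenever the tape head happens to sit three positions to the right. I would eliminate every such collision by augmenting the alphabet with fresh marker triples that appear only as the first output of a pair-emitting rule---e.g., $(b^*,q',\blank)$ in place of $(b,q',\blank)$---together with passthrough analogues of the non-pair-emitting rules that treat each marker exactly as its underlying triple and rewrite $b^*\mapsto b$ once the iteration has moved past it. The augmented Lag system still simulates $T$, since every reachable configuration is isomorphic to one of $L$, and the induced $M$ is now a genuine function.

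For $K>4$, I would apply the same construction to a padded restricted $(K-1,2)$-Lag system obtained from $L$ by extending each rule's context with $K-3$ wildcard slots, again relying on the marker triples in the last position(s) of the context to signal a pair-emission sub-call and prevent collisions with any first-call context; this establishes the claim for every $K\geq 4$.
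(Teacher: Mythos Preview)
Your overall strategy matches the paper's: invoke Theorem~\ref{thm:1} to obtain the $(2,2)$-Lag system $L$, then realize it by a $4$-gram predictor under extended autoregressive decoding. The paper's own proof is a two-line sketch---each rule of $L_1$--$L_6$ is ``expressible as a $3$-gram'' and $L_7$ as a ``$4$-gram,'' with $L_7$ requiring an output bigram from a bigram context---and it neither constructs the predictor explicitly nor discusses well-definedness.

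Where you go further is in building $\hat M:\Sigma^3\to\Sigma\cup\{h\}$ with the decoder window at $N=3$, so the first sub-call sees a look-ahead $z$ and the second slides to $(x_2,z,y_1)$, losing $x_1$. The collision you flag is then real and your marker instinct is right, but the fix as stated is not yet complete: once $(b^*,q',\blank)$ is appended it re-enters later first-call windows, and in particular the third-sub-call context $(z,(b^*,q',\blank),(\#,\blank,\blank))$ coincides with the first-call context of your passthrough $L_2$-analogue when $(b^*,q',\blank)$ sits at position~$2$ of the Lag string, so further disambiguation (e.g., marking $y_2$ as well, plus a reachability argument) is still needed. By contrast, the paper's phrasing (``output bigram given a bigram context'') points to the reading where the outer window stays at $N=2$ and the $4$-gram property is what lets the predictor condition $y_2$ on $(x_1,x_2,y_1)$; since the Lag rule is already determined by $(x_1,x_2)$, and $x_1=(\#,q',\blank)$ is retained at the second sub-call, no collision or marker machinery arises. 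Your route is more explicit but heavier; the paper's is cleaner but leaves that reading implicit.
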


\begin{proof}
Follows directly from the proof of Theorem~\ref{thm:1},
since every rule produced by Algorithm~\ref{alg:tm2lag}
is expressible by a $3$-gram ($L_1$--$L_6$) or a $4$-gram ($L_7$),
where $L_7$ is necessary for Turing completeness and requires extended
autoregressive decoding to realize.
Note that simulating the application of $L_7$ requires
an extended autoregressive decoding strategy
(as discussed in Section~\ref{sec:auto})
to generate an output bigram given a bigram context.
\end{proof}

\section{A universal Lag system}
\label{sec:utm}

The primary goal of this paper is to demonstrate that current language models
are computationally universal under extended autoregressive decoding.
For any given language model,
the most direct method for establishing such a result is to identify a known
computationally universal system that the model can simulate.

Ultimately, any assertion of computational universality relies on the
Church-Turing thesis, that all computational mechanisms are expressible
by a Turing machine \cite{sipser13,mooremertens11}.
The concept of a \emph{universal Turing machine}---%
a Turing machine $U$ that can simulate the execution of any other Turing
machine $T$ on any input---%
was developed by Alan Turing to solve the \emph{Entscheidungsproblem}
\cite{turing37}.
Proving computational universality of a system therefore reduces to
establishing that the system can simulate the operation of a universal
Turing machine.

Given the similarity between autoregressive decoding of a language model and
the updates of a Lag system, as established in Proposition~\ref{prop:auto2lag},
it is natural to seek a universal Lag system to provide the foundation for a
proof of universality.
The results of the previous section (Theorem~\ref{thm:1})
now make the construction of a universal Lag system straightforward.

Let $L(T)$ denote the Lag system that is produced by applying
Algorithm~\ref{alg:tm2lag} to a Turing machine $T$.

\begin{theorem}
For any universal Turing machine $U$,
the Lag system $L(U)$ obtained by applying Algorithm~\ref{alg:tm2lag} to $U$
is also universal, in the sense that
Algorithm~\ref{alg:lag} operating on $L(U)$, like $U$, is able to simulate
the execution of any Turing machine $T$ on any input
$\gamma=\gamma_1...\gamma_{n-1}$; $i_0$.
\hspace*{-2em}
\label{thm:2}
\end{theorem}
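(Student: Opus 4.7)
The plan is to chain together the universality of $U$ with Theorem~\ref{thm:1}, which already establishes that $L(T)$ faithfully tracks Algorithm~\ref{alg:tm} on $T$ for \emph{any} Turing machine $T$. The key observation is that ``universality of $U$'' means exactly that: for every Turing machine $T$ and every input $\gamma_1...\gamma_{n-1}$ with start location $i_0$, there is an encoding $e(T,\gamma,i_0)\in\Gamma_U^*$ such that running Algorithm~\ref{alg:tm} with $U$ on input $e(T,\gamma,i_0)$ simulates the execution of Algorithm~\ref{alg:tm} with $T$ on $\gamma$. Taking this as given, the theorem will follow by transitivity once we connect $L(U)$ back to $U$ via Theorem~\ref{thm:1}.

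First I would formalize what the initial memory string of $L(U)$ must be in order for the simulation to kick off. Concretely, given $T$, $\gamma$ and $i_0$, let $e(T,\gamma,i_0)=\eta_1...\eta_{m-1}$ (padding with blank symbols $b_U$ if necessary so that $m\geq 3$, and adjusting the starting head position if necessary, both of which preserve the behavior of $U$). Then form the triples
\[
(\eta_1,\blank\,,\blank)\,...\,(\eta_{j-1},\blank\,,\blank)\,(\eta_{j},q^U_0,\blank)\,(\eta_{j+1},\blank\,,\blank)\,...\,(\eta_{m-1},\blank\,,\blank)\,(\#,\blank\,,\blank),
\]
where $j$ is the appropriately adjusted start location and $q^U_0$ is the start state of $U$. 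This matches the initial-string format required by Theorem~\ref{thm:1}.

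Next I would apply Theorem~\ref{thm:1} to $U$ with this input. The theorem supplies a monotone iteration-map $t:\NN\to\NN$ such that for every iteration $k$ of Algorithm~\ref{alg:tm} running $U$ on $e(T,\gamma,i_0)$, the memory string of Algorithm~\ref{alg:lag} operating on $L(U)$ at step $t(k)$ is exactly the triple-encoded snapshot of $U$'s tape, head location, and control state at step $k$. In particular, if $U$ halts at some step $k^*$, then $L(U)$ reaches a halting configuration at step $t(k^*)$ that decodes to the same tape contents; if $U$ does not halt, neither does $L(U)$.

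Finally, composing the two simulations finishes the proof: Algorithm~\ref{alg:tm} on $U$ with input $e(T,\gamma,i_0)$ simulates Algorithm~\ref{alg:tm} on $T$ with input $\gamma$ (by universality of $U$), and Algorithm~\ref{alg:lag} on $L(U)$ simulates Algorithm~\ref{alg:tm} on $U$ (by Theorem~\ref{thm:1}). Hence $L(U)$ simulates $T$ on $\gamma$, and $L(U)$ is universal. The only mildly delicate point is the bookkeeping around the encoding $e$---ensuring the padding and starting-location conventions of Algorithm~\ref{alg:tm} and of Theorem~\ref{thm:1} ($n\geq 3$, $0<i<n$) are both met---but this is a standard adjustment that does not affect the behaviour of a universal machine; there is no real mathematical obstacle, since all the heavy lifting has already been done in Theorem~\ref{thm:1}.
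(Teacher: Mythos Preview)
Your proposal is correct and follows essentially the same route as the paper: unpack universality of $U$ as the existence of an encoding of $(T,\gamma,i_0)$ into an input for $U$, then invoke Theorem~\ref{thm:1} to transport the simulation from $U$ to $L(U)$, and compose. The paper factors the encoding into a pair $(c_T,\ell_T)$ rather than a single $e(T,\gamma,i_0)$ and omits your padding/halting remarks, but those are cosmetic differences.
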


\begin{proof}
The universality of $U$ implies that for any Turing machine $T$ there exist
two computable functions---%
an encoding function $c_T$ and a location function $\ell_T$---%
such that, 
for any input $\gamma$ and initial location $i_0$,
the execution of $U$ on $c_T(\gamma)$ starting from tape location $\ell_T(i_0)$
simulates the execution of $T$ on $\gamma$ starting from $i_0$.
By Theorem~\ref{thm:1} it then immediately follows that,
using $L(U)$,
Algorithm~\ref{alg:lag}
is able to simulate the execution of $U$ on
$c_T(\gamma)$ starting from location $\ell_T(i_0)$,
hence it is also able to simulate
the execution of $T$ on $\gamma$ starting from $i_0$.
\end{proof}

The concluding
argument below will be based on formulating then simulating a universal
Lag system $L(U)$ with extended autoregressive decoding of a language model.
A key challenge will be to identify a compact universal Lag system,
since the entire prompt design for a large language model needs to
fit within the model's bounded context window.
To that end, we consider a universal Lag system derived from
a small universal Turing machine.

There has been a long running effort to identify the smallest universal Turing
machines in terms of the number of states and tape symbols used,
starting with \cite{shannon56}.
A gap remains between the known upper and lower bounds on
the state and symbol counts for a universal Turing machine
\cite{neary08,nearywoods09,nearywoods12},
but progressively smaller universal Turing machines have been identified.
In this paper, we will consider one such machine, $U_{15,2}$,
that uses only 15 states and 2 tape symbols \cite{nearywoods09},
which is Pareto optimal in terms of the smallest known
universal Turing machines \cite{neary08}.
Formally, the Turing machine $U_{15,2}$ can be defined by a tuple
$U_{15,2}=(Q,\Gamma,b,q_0,H,f)$, where
$Q=\{A,B,C,D,E,F,G,H,I,J,K,L,M,N,O\}$,
$\Gamma=\{0,1\}$,
$b=0$,
$q_0=A$,
$H=\{(J,1)\}$,
and the transition function $f$ is defined in Table~\ref{tab:U15,2}.
The proof of universality of $U_{15,2}$ also requires the
initial position of the tape head, $i_0$, to depend
on the target system being simulated \cite{nearywoods09}.

\begin{table}[t]
\centering
\begin{tabular}{c|cccccccc}
    & $A$     & $B$     & $C$     & $D$     & $E$     & $F$     & $G$     & $H$     \\ \hline
$0$ & $0,+,B$ & $1,+,C$ & $0,-,G$ & $0,-,F$ & $1,+,A$ & $1,-,D$ & $0,-,H$ & $1,-,I$ \\
$1$ & $1,+,A$ & $1,+,A$ & $0,-,E$ & $1,-,E$ & $1,-,D$ & $1,-,D$ & $1,-,G$ & $1,-,G$ \\ \hline
\end{tabular}

\begin{tabular}{c|ccccccc}
    & $I$     & $J$     & $K$     & $L$     & $M$     & $N$     & $O$     \\ \hline
$0$ & $0,+,A$ & $1,-,K$ & $0,+,L$ & $0,+,M$ & $0,-,B$ & $0,-,C$ & $0,+,N$ \\
$1$ & $1,-,J$ & halt    & $1,+,N$ & $1,+,L$ & $1,+,L$ & $0,+,O$ & $1,+,N$ \\ \hline
\end{tabular}
\caption{
Transition table for the universal Turing machine $U_{15,2}$.
Rows are indexed by the read symbol $\gamma$,
columns are indexed by the state $q$,
and each table entry $(\gamma',D,q')$ specifies the write symbol $\gamma'$,
the tape head move $D\in\{-1,+1\}$, and the next state $q'$.
The initial state is $A$.
}
\label{tab:U15,2}
\end{table}

Given its compact size,
we therefore adopt the universal Lag system $L(U_{15,2})$
as the target system to be simulated for the remainder of this paper.

\begin{corollary}
Algorithm~\ref{alg:lag} operating on $L(U_{15,2})$ is able to simulate
the execution of any Turing machine $T$ on any input
$\gamma=\gamma_1...\gamma_{n-1}$; $i_0$.
\end{corollary}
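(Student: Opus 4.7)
The plan is to invoke Theorem~\ref{thm:2} directly, instantiated at $U = U_{15,2}$. The only input needed is that $U_{15,2}$ is itself a universal Turing machine, which is precisely the result of \cite{nearywoods09} cited in the paragraph preceding the corollary. Once universality of $U_{15,2}$ is in hand, Theorem~\ref{thm:2} asserts that the Lag system $L(U_{15,2})$ produced by Algorithm~\ref{alg:tm2lag} can simulate the execution of $U_{15,2}$ on any input, and through the encoding functions $c_T$ and $\ell_T$ guaranteed by universality of $U_{15,2}$, this in turn simulates any Turing machine $T$ on any input $\gamma_1\dots\gamma_{n-1}$; $i_0$.

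Concretely, the proof I would write is a two-sentence application: first, cite \cite{nearywoods09} to record that $U_{15,2}$ is universal in the classical sense, so that for every Turing machine $T$ there exist computable $c_T$ and $\ell_T$ such that the run of $U_{15,2}$ on $c_T(\gamma)$ starting at $\ell_T(i_0)$ simulates the run of $T$ on $\gamma$ starting at $i_0$. Second, apply Theorem~\ref{thm:2} with $U = U_{15,2}$ to conclude that Algorithm~\ref{alg:lag} on $L(U_{15,2})$ likewise simulates this run, and therefore simulates $T$ on the original input.

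There is no real obstacle, since the heavy lifting has already been done: the reduction from arbitrary Turing machines to Lag systems is carried out in Theorem~\ref{thm:1}, its specialization to a universal machine is exactly Theorem~\ref{thm:2}, and the existence of the small universal machine $U_{15,2}$ is an external result. The corollary is simply the named instance of Theorem~\ref{thm:2} at the particular universal Turing machine that the remainder of the paper will target; its role is bookkeeping, fixing the specific $L(U)$ whose $2027$ production rules over $262$ symbols will be simulated by \texttt{gemini-1.5-pro-001} in Section~\ref{sec:gemini}.
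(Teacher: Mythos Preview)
Your proposal is correct and matches the paper's own proof essentially verbatim: the paper simply writes ``Immediate from the universality of $U_{15,2}$ \cite{nearywoods09} and by Theorem~\ref{thm:2},'' which is precisely the two-step argument you outline.
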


\begin{proof}
Immediate from the universality of $U_{15,2}$ \cite{nearywoods09}
and
by
Theorem~\ref{thm:2}.
\end{proof}

The universal Lag system $L(U_{15,2})$
consists of $2027$ rules defined over $262$ ``symbols'',
where each symbol is a triple, as explained in Section~\ref{sec:sim}.
Of the $2027$ rules, only $16$ produce a pair of output symbols for the
matching context (i.e., $L_7$),
while the remaining $2011$ only output a single symbol for the matching context
(i.e., $L_1$--$L_6$).

\section{Simulating a universal Lag system with a language model}
\label{sec:gemini}

Our final task is to demonstrate that an existing large language model can
simulate the execution of the universal Lag system $L(U_{15,2})$
on any input string.
This will be achieved by developing a specific prompt for the language model
that drives extended autoregressive (greedy) decoding to mimic the behaviour
of $L(U_{15,2})$.

As a preliminary step, note that the $262$ symbols used by 
$L(U_{15,2})$ are not in a convenient format for this purpose,
as each symbol is a triple that requires auxiliary characters to describe
(such as parentheses and commas).
Rather than have the language model 
generate a bunch of extraneous syntactic detail,
we streamline the task by introducing a simple invertible mapping 
between the $262$ triples and $262$ token pairs.
In particular, we leverage a simple bijection
where each triple is assigned a unique token pair,
implementable by two dictionaries:
an encoding dictionary that maps triples to token pairs,
and an inverse decoding dictionary that maps token pairs to triples.
%
The computation then proceeds by
encoding the Turing machine simulation as a sequence of token pairs,
rather than a sequence of triples.
(Note that realizing this simulation requires extended autoregressive decoding,
as introduced in Section~\ref{sec:auto}, since the language model must
generate 2 or 4 tokens for each given context.)
The resulting formulation simulates the execution of the universal Turing
machine by incorporating this bijection in the computable function
$c_{U_{15,2}}$ used in the proof of Theorem~\ref{thm:2}.

We develop a prompting strategy that consists of two components:
a ``system prompt'' that provides the full rule set 
(expressed over the token pair encoding)
to the language model,
and a ``sliding window prompt''
where the next symbol pair (4 tokens) from the input sequence is appended.
(This prompt structure
is reminiscent of the recent streaming model proposed by \cite{xiaoetal24}.)
On each iteration,
the next symbol pair (4 tokens) from the sequence is appended to the system
prompt and given to the language model as input;
the output of the language model (2 or 4 tokens)
is then appended to the end of the
sequence, as shown in Figure~\ref{fig:uprompt}.
To ensure a deterministic system, we use a temperature of zero and fix
all the random seeds that define the language model's behaviour,
as discussed in Section~\ref{sec:auto}.
To allow the language model to emit a variable number of tokens
for each context window,
we employ extended autoregressive decoding as explained in
Section~\ref{sec:auto}
where an implicit latent halt token $h$ outside the base alphabet of
$262$ token pairs is used.

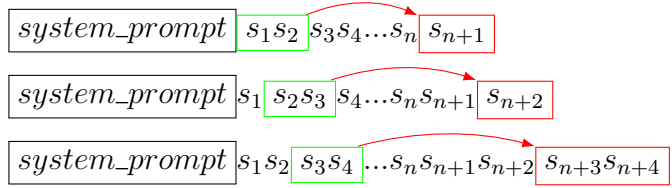
\begin{figure}
\centering
\hskip2cm
\begin{tikzpicture}[baseline=1]
\draw[-{Latex},red] (3.950,2.050) .. controls (4.447,2.261) and (4.943,2.261) .. (5.440,2.050);
\put (0,50) {
\fcolorbox{black}{white}{$system\rule{0.4em}{0.1ex}prompt$}\fcolorbox{green}{white}{$s_1s_2$}$s_3s_4...s_n\fcolorbox{red}{white}{$s_{n+1}$}$ };
\draw[-{Latex},red] (4.290,1.189) .. controls (4.933,1.4) and (5.577,1.4) .. (6.220,1.189);
\put (0,25) {
\fcolorbox{black}{white}{$system\rule{0.4em}{0.1ex}prompt$}$s_1$\fcolorbox{green}{white}{$s_2s_3$}$s_4...s_ns_{n+1}\fcolorbox{red}{white}{$s_{n+2}$}$ };
\draw[-{Latex},red] (4.650,0.309) .. controls (5.433,0.52) and (6.217,0.52) .. (7.000,0.309);
\put (0,0) {
\fcolorbox{black}{white}{$system\rule{0.4em}{0.1ex}prompt$}$s_1s_2$\fcolorbox{green}{white}{$s_3s_4$}$...s_ns_{n+1}s_{n+2}\fcolorbox{red}{white}{$s_{n+3}s_{n+4}$}$ };
\end{tikzpicture}
\caption{
\emph{Prompting strategy} for simulating a universal Lag system.
The $system\rule{0.4em}{0.1ex}prompt$ consists of copies of the entire rule set,
which is prepended to the prompt used in each call to the language model.
A sliding context window of size $2$ is moved through the symbol sequence,
emitting $1$ or $2$ symbols that are appended to the end of the
sequence in each iteration.
(Note that each symbol is a pair of tokens.)
The sliding context window advances $1$ position per iteration.
For example, in the first row, 
the prompt given to the language model on the initial iteration
is $system\rule{0.4em}{0.1ex}prompt\: s_1s_2$;
for the next iteration,
the prompt given to the language model is $system\rule{0.4em}{0.1ex}prompt\: s_2s_3$,
then $system\rule{0.4em}{0.1ex}prompt\: s_3s_4$ in the subsequent iteration, and so on.
}
\label{fig:uprompt}
\end{figure}

Finally, we choose a specific large language model to verify whether
extended autoregressive (greedy) decoding is indeed able to replicate
the behaviour of $L(U_{15,2})$.
For this purpose, we use 
{\tt gemini-1.5-pro-001},
which is a publicly released large language model
that can be accessed at
\url{https://ai.google.dev/gemini-api/docs/models/gemini}.
After some experimentation, we developed a single system prompt that
drives this model to correctly execute each of the $2027$ rules.
We refer to this system prompt as $S_{\textrm{gemini}}$.
Based on these developments, we reach the final claim.

\begin{theorem}
For any string $\gamma_1...\gamma_{n-1}$,
{\tt gemini-1.5-pro-001}
with extended autoregressive (greedy) decoding,
using the system prompt string $S_{\textrm{gemini}}$ and a sliding
window of length $2$,
is able to 
simulate
the execution of the universal Lag system
$L(U_{15,2})$ on $\gamma_1...\gamma_{n-1}\#$.
\end{theorem}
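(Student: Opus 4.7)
The plan is to reduce the theorem to a finite verification problem and then appeal to the reduction already established by Proposition~\ref{prop:auto2lag}. With temperature zero and fixed random seeds, {\tt gemini-1.5-pro-001} prompted with $S_{\textrm{gemini}}$ defines a deterministic function $M_{\textrm{gemini}}$ from bigram contexts $(x_1,x_2)\in\Sigma_{L}^2$ (where $\Sigma_L$ denotes the set of $262$ symbol-triples, each represented by a token pair) to short output strings in $\Sigma_L\cup\Sigma_L^2\cup\{h\}$. Proposition~\ref{prop:auto2lag} tells us that to show extended autoregressive decoding of this deterministic model replicates the execution of $L(U_{15,2})$, it suffices to show that $M_{\textrm{gemini}}$ agrees with the Lag-system transition function $L(U_{15,2}):\Sigma_L^2\to\Sigma_L\cup\Sigma_L^2$ on every context that can arise during a run.

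First I would observe that, because $L(U_{15,2})$ is deterministic and has exactly $2027$ rules $x_1 x_2\to y$, the set of contexts that can possibly appear during any run of Algorithm~\ref{alg:lag} is contained in the (finite) set of left-hand sides of those rules. Hence it is enough to verify that for each of the $2027$ rules the model, when presented with $S_{\textrm{gemini}}$ followed by the four tokens encoding $x_1 x_2$, outputs exactly the two or four tokens encoding $y$, followed by the implicit halt token $h$ that signals the end of the current emission under extended autoregressive decoding (Algorithm~\ref{alg:auto}). Since the model is deterministic on each input, this verification is a finite check: enumerate the $2027$ rules, query the model on each corresponding prompt, and confirm that the output tokens decode via the inverse bijection of Section~\ref{sec:gemini} to the prescribed right-hand side.

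Once this finite table of $2027$ input--output pairs has been certified, the induction of Proposition~\ref{prop:auto2lag} applies verbatim: the bijection between triples and token pairs ensures the correspondence between the model's token sequence and the Lag system's memory string is preserved at every iteration; whenever Algorithm~\ref{alg:lag} would apply rule $x_1 x_2\to y$ to the current context, the sliding-window prompt $S_{\textrm{gemini}}\,x_1 x_2$ causes the model to emit exactly $y$ (one or two symbols, as a pair or a quadruple of tokens), which the extended decoding loop appends in place. Special attention is required for the $16$ rules in $L_7$ that emit two symbols ($4$ tokens) rather than one: these are precisely the rules that necessitate the implicit halt token $h$ described in Section~\ref{sec:auto}, and their correct handling must be confirmed in the verification pass. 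The main obstacle is not mathematical but engineering: designing the single system prompt $S_{\textrm{gemini}}$ so that greedy decoding of {\tt gemini-1.5-pro-001} is correct on all $2027$ contexts simultaneously, including the subtle distinction between the single-emission rules and the four-token emission rules of $L_7$; this is precisely what the experimental work reported in Section~\ref{sec:gemini} accomplishes, and the theorem then follows by combining that empirical verification with Proposition~\ref{prop:auto2lag} and Theorem~\ref{thm:2}.
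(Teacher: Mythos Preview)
Your proposal is correct and follows essentially the same approach as the paper: reduce to a finite case check over the $2027$ rules and verify each one empirically. The paper's own proof is in fact terser than yours, consisting only of the enumeration step; your additional justification for \emph{why} the finite check suffices (the contexts reachable from a well-formed input are contained in the left-hand sides, and the induction of Proposition~\ref{prop:auto2lag} then carries the simulation forward step by step) is sound, though note that Proposition~\ref{prop:auto2lag} as stated goes in the direction ``for any $M$ there exists $L$'' rather than the direction you need---what you are really using is the symmetric induction in its \emph{proof}, which shows that once $M$ and $L$ agree on every context they produce identical runs.
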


\begin{proof}
The proof consists of an enumeration of the $2027$ cases,
where, for each rule $x_1x_2\rightarrow y$ in $L(U_{15,2})$,
the prompt 
$S_{\textrm{gemini}}x_1x_2$ 
is provided to the
language model and the model's response is verified to correspond
to the rule's output $y$.
\end{proof}

From this theorem, by the Church-Turing thesis, we conclude that
{\tt gemini-1.5-pro-001}
under extended autoregressive (greedy) decoding
is a general purpose computer.
Notably, no additional computational mechanisms 
beyond extended autoregressive decoding are required to achieve this result.

\section{Related work}
\label{sec:related}

It has already been intuited 
in the literature
that large language models are able to express
arbitrary computation via autoregressive decoding \cite{jojicetal23}---%
an informal conclusion reached through a series of prompt designs that
elicit increasingly sophisticated computational behaviour.
However, a proof has not been previously demonstrated.
By establishing such a result in this paper, 
we observe more precisely that generalized autoregressive decoding 
(Section~\ref{sec:auto}) is only sufficient for realizing linear bounded
automata (Corollary~\ref{cor:1}),
whereas extended autoregressive decoding is required for simulating
general Turing machines (Theorem~\ref{thm:1} and Corollary~\ref{cor:2}).
Previous work \cite{schuurmans23} has demonstrated that augmenting a
language model with an external random access memory allows
universal computational behaviour to be elicited via prompting,
but such a result is weakened by the need to introduce external regular
expression parsers for managing the interaction between the language model
and the memory.
Here we do not consider any such augmentations of the model,
and instead only consider extended autoregressive decoding introduced in
Section~\ref{sec:auto}.
That is,
even though variety of other extensions to transformer-based language models
continue to be investigated \cite{mialonetal23},
these are unnecessary for achieving computational universality
(although they undoubtedly have practical benefits).

A significant amount of prior work has investigated the hypothetical
expressiveness of different architectures for sequential neural networks,
asking whether a given architecture is able, in principle,
to express a given model of computation
(e.g., Turing machine, formal language class, or circuit family).
Such works typically argue for the existence of model parameters that realize
targeted behaviour,
but do not account for the learnability of such parameters from data.
Moreover, such results generally do not apply directly to pre-trained
models where the parameters have been frozen.

Regarding Turing machine simulation,
an extensive literature has investigated the ability of various
sequential neural network architectures to express universal computation,
focusing, for example, on
recurrent neural networks \cite{siegelmannsontag92,chenetal18},
transformers \cite{perezetal19,bhattamishraetal20},
attention mechanisms in particular \cite{perezetal21},
transformers with recurrence (informally) \cite{dehghanietal19},
and neural Turing machines
\cite{perezetal19}.
A related line of research has investigated novel computational architectures
tailored to transformers that retain universal computational properties
\cite{giannouetal23}.
A common shortcoming of these studies is that,
by focusing on representational capacity, 
they exploit high precision weights to encode data structures,
such as multiple stacks,
without considering whether these specific weight configurations are
learnable from data.
More recently, there has been some attempt to control the precision of the
weights and still approximate general computational behaviour 
with reasonable statistical complexity \cite{weietal22a}.
Another interesting result is that Turing completeness can still be
achieved by a recurrent neural network with bounded precision neurons,
provided the hidden state can grow dynamically \cite{chungsiegelmann21}.
These prior results 
nevertheless
do not apply to existing large language models without
altering their weights (as far as anyone currently knows).

The literature on neural Turing machines,
although directly inspired by theoretical models of computation, 
has remained largely empirical
\cite{gravesetal14,gravesetal16,kaisersutskever16,kurachetal16}.
Some of this work has shown that augmenting recurrent neural
networks with differentiable memory 
in the form of stack or queue
does allow rich language structures to be learned from data
\cite{grefenstetteetal15}.
The computational universality of such models has only been
demonstrated 
theoretically
by \cite{perezetal19}, with the same shortcomings as above.

Another line of work has considered practical restrictions on neural network
parameterizations, focusing on the effects of limited precision representation.
Again, these works focus on hypothetical expressiveness rather than 
learnability or applicability to pre-trained models.
Useful insights have been obtained regarding the ability of
alternative architectures to realize restricted forms of computation with
finite precision parameterizations,
including the expressiveness of
recurrent neural networks (including LSTMs and GRUs)
\cite{weissetal18,korskyberwick19},
attention mechanisms with depth and width bounds \cite{hahn20},
linear transformers \cite{irieetal23},
transformers \cite{weissetal21,fengetal23,merrillsabharwal24,lietal24},
and state space models 
\linebreak
\cite{merrilletal24}.
None of these results apply to existing, pre-trained language models,
nor do they establish Turing completeness, but they do identify
clear limitations of myopic next token or bounded chain of thought processing
with finite precision representations.

In this paper, we have primarily focused on the ability of language models to
simulate general Turing machines or linear bounded automata.
There is an extensive literature that investigates the relationship between
various models of computation based on automata and alternative grammar classes
\cite{chomsky59,chomskyschutzenberger63}.
The relationship between linear bounded automata \cite{myhill60}
and context sensitive grammars \cite{chomsky59} is particularly interesting.
Initially it was established by \cite{landweber63} that non-deterministic
linear bounded automata can only recognize context sensitive languages 
\cite{chomsky59},
with their full equivalence subsequently established by \cite{kuroda64}.
Remarkably, the question of whether 
deterministic versus non-deterministic linear bounded automata 
exhibit the same language recognition ability
(and thus equivalent to context sensitive grammars)
has remained open for 60 years \cite{kutribetal18},
even though it is easy to show a corresponding equivalence for Turing machines
\cite{sipser13}.
The results for Lag systems developed in Section~\ref{sec:queue}
were inspired by the finding that any context sensitive grammar
can be converted to a purely one-sided or direction-independent context
sensitive grammar \cite{penttonen74,kleijnetal84}.
The earlier Tag systems studied by \cite{post43}
also inspired a long sequence of works that establish
Turing completeness of nearby variants \cite{wang63,cockeminsky64},
ultimately leading to the key findings that 
one dimensional cellular automata are Turing complete \cite{cook04},
and later to the identification of the smallest known universal Turing machines
\cite{nearywoods09,nearywoods12}.
We have directly exploited the results of the latter work in this paper,
but it remains open whether early variants of Tag systems might lead to more
compact reductions and simpler proofs of universality.

\section{Conclusion}
\label{sec:concl}

We have shown that a particular large language model,
{\tt gemini-1.5-pro-001},
is Turing complete
under extended autoregressive (greedy) decoding.
This result is not hypothetical:
the proof establishes Turing completeness of the current model in its
publicly released state.
That is,
{\tt gemini-1.5-pro-001}
is already a general purpose computer,
not merely a hypothetical computer.
We expect that the same is true of almost all currently released
large language models, both closed and open source,
but we have not yet conducted the necessary verifications on other alternatives.
Performing such a verification likely requires re-engineering of the system
prompt and possibly a fair amount of experimentation in each case,
but the proof strategy remains the same:
verify that each of the $2027$ production rules in the
universal Lag system $L(U_{15,2})$ are correctly simulated.
We do not know whether a single system prompt can be successful for a large set
of language models, but that too is a possibility worth considering.

It remains future work to consider alternative reductions
from a universal model of computation (universal Turing machine,
Lag system, or some other alternative)
that leads to a more compact verification.
The $2027$ rules in the current proof is a significant reduction from our 
earlier attempts, and it seems like even this rule set can likely be 
compressed further,
leading to easier verifications for other language models.

There are some interesting implications from these findings.
Essentially, we have established that a current large language model is already
able to simulate any algorithm or computational mechanism,
by the Church-Turing thesis.
However,
the key benefit of a large language model over a classical (formal) computer
is that, while a language model sacrifices nothing in terms of ultimate
computational power, it offers a far superior interface for human users 
to express their computational intent.
In principle, a human user can bypass full formalization their problem,
and can even avoid providing precise step by step instructions for how to solve
infinitely many variations of their particular problem instance
(i.e., design an algorithm or write a program),
yet the user can still hope to elicit useful computational behaviour
toward generating a solution.
The consequences of providing a more natural interface for producing desired
computational behaviour have already become widely apparent.

Also, as is the case with classical computers,
long computational sequences are necessary for solving arbitrary tasks.
Although it is not yet the most common mode of operation for language models,
processing very large inputs and emitting extremely long output sequences
is inevitable for tackling serious use cases.

Finally, we conclude with an (admittedly extreme) conjecture:
training to perform next token prediction on human generated text
does not ``teach'' a large language model to reason or compute;
rather, with probability nearly $1$ a randomly initialized language model
is already a universal computer,
and next token prediction is merely a mechanism 
for making its input-output behaviour understandable to humans.
This hypothesis is falsifiable in principle, but we have not yet been able
to prove it either way.

\section*{Acknowledgments}

Sincere thanks to Petros Maniatis, Csaba Szepesv\'{a}ri, Alex Lewandowski,
Benjamin Van~Roy and Doina Precup for essential discussions and
comments on this paper.
Support from the CIFAR Canada AI Research Chairs Program,
NSERC, and Amii are also gratefully acknowledged.

\bibliographystyle{apalike}

\end{document}